\newtheorem{claim}{Claim}
\newcommand{\dom}{dom}
\newcommand{\VCdim}{VCdim}
\newcommand{\str}{s}
\newcommand{\s}{s}
\newcommand{\R}{\mathbb{R}}
\newcommand{\bC}{\bar{C}}
\newcommand{\sstr}{st}
\newcommand{\proj}{|}
\newcommand{\one}{{\mathbf{1}}}
\newcommand{\zero}{{\mathbf{0}}}
\newcommand{\powset}{\mathcal{P}}
\newcommand{\size}{size}
\renewcommand{\st}{\operatorname{st}}
\renewcommand{\ss}{\operatorname{st}}
\definecolor{Gray}{gray}{0.85}
\title{
Labeled compression schemes for extremal classes\thanks{Supported by NSF grant IIS-1118028.}
}
\begin{document}
\title{Labeled compression schemes for extremal classes}
 \author{\name{Shay Moran} \email{shaymoran1@gmail.com}\\
 \addr Technion, Israel Institute of Technology, Haifa 32000, Israel,
Microsoft Research, Herzliya, and
Max Planck Institute for Informatics, Saarbr\"{u}cken, Germany.
 \AND
 \name{Manfred K. Warmuth} \email{manfred@cse.ucsc.edu}\\
 \addr University of Santa Cruz, California
 }

\editor{}
\maketitle

\begin{abstract}
It is a long-standing open problem whether
there always exists a compression scheme whose
size is of the order of the Vapnik-Chervonienkis (VC) dimension $d$.
Recently compression schemes of size exponential in $d$
have been found for any concept class of VC dimension $d$. 
Previously, compression schemes of size $d$ have
been given for maximum classes, which are
special concept classes whose size equals an upper bound due to Sauer-Shelah.
We consider a generalization of maximum classes
called extremal classes. Their definition is based on a powerful generalization of the Sauer-Shelah
bound called the Sandwich Theorem, which has been studied in several areas of combinatorics
and computer science.
The key result of the paper is a construction of a sample
compression scheme for extremal classes of size equal to
their VC dimension.
We also give a number of open problems concerning the
combinatorial structure of extremal classes and the existence of
unlabeled compression schemes for them.
\end{abstract}

\section{Introduction}
Generalization and compression/simplification are two basic facets of
``learning''.
Generalization concerns the expansion of existing knowledge
and compression concerns simplifying our explanations of it.
In machine learning, compression and generalization are deeply related:
learning algorithms perform compression and the ability to
compress guarantees good generalization.

A simple form of this connection is how Occam's Razor \citep{BlumerEHW87} is
manifested in Machine Learning: if the input sample can be
compressed to a small number of bits which encodes a
hypothesis consistent with the input sample, then good
generalization of this hypothesis is guaranteed.
A more sophisticated notion of compression is given by
"sample compression schemes" \citep{littleWarm}. In these schemes
the input sample is compressed to a carefully chosen small subsample that encodes
a hypothesis consistent with the input sample.
For example support vector machine can be seen as
compressing the original sample to the subset of support vectors
which represent a maximum margin hyperplane that is
consistent with the entire original sample.

What is the connection to generalization?
In the Occam's razor setting, the generalization error decreases with the number of bits that are used 
to encode the output hypothesis.
Similarly for compression
schemes, the generalization error decreases with the sample size.

In the learning model considered here, the learner is given a sample
consistent with an unknown concept from a target concept class. 
From the given sample, the learner aims to construct a hypothesis
that yields a good generalization i.e.~a good approximation of the
unknown concept. A core question is what parameter of the
concept class characterizes the sample size required
for good generalization? 
The Vapnik-Chervonenkis (VC) dimension serves as such a parameter
\citep{zbMATH04143473} where the exact definition of generalization
underlying our discussion
is specified by the Probably Approximately Correct (PAC)
model of learning \citep{zbMATH03943062,VC1}.
We believe that the size of the best compression scheme is an
alternate parameter and has several additional advantages:
\begin{itemize}
\item 
Compression schemes frame many natural algorithms
(e.g.~support vector machines). 
This gives sample compression schemes
a constructive flavor.
\item
Unlike the VC dimension, the definition of sample
compression schemes as well as the fact that they yield
low generalization error extends naturally to
multi label concept classes~\citep{DBLP:conf/alt/SameiYZ14}. 
This is particularly interesting when
the number of labels is very large (or possibly infinite),
because for that case there is no known
combinatorial parameter that characterizes 
the sample complexity in the PAC model~(See \citep{DBLP:conf/colt/DanielyS14}).
The size of the best sample compression scheme is therefore a natural
candidate for a universal parameter that characterizes the sample complexity
in the PAC model.
\end{itemize}

\subsection*{Previous work}

\cite{littleWarm} defined \emph{sample compression schemes} 
and showed that in the PAC model of learning, the 
sample size required for learning grows linearly with the size
of the subsamples the scheme compresses to.
They have also posed the other direction as an open question: 
Does every concept class have a compression scheme of size 
depending only on its VC dimension?
Later \cite{DBLP:journals/ml/FloydW95} and \cite{DBLP:conf/colt/Warmuth03}, 
refined this question: Does every class of VC dimension $d$
have a sample compression scheme of size $O(d)$.

\cite{DBLP:journals/dam/Ben-DavidL98} 
proved a compactness theorem for sample compression schemes.
It essentially says that existence of compression schemes for infinite 
classes follows\footnote{The proof of that theorem is however non-constructive.} 
from the existence of such schemes for finite classes.
Thus, it suffices to consider only finite concept classes. 
\cite{DBLP:journals/ml/FloydW95} constructed sample compression schemes 
of size $\log|C|$ for every concept class $C$. 
More recently~\cite{MSWY15}
have constructed sample compression schemes of size 
$\exp(d)\log\log|C|$ where $d=\VCdim(C)$. Finally,~\cite{DBLP:journals/corr/MoranY15}
have constructed sample compression scheme of size $\exp(d)$, 
resolving Littlestone and Warmuth's question.
Their compression scheme is based on an earlier compression scheme by 
\cite{DBLP:journals/iandc/Freund95,schapire2012boosting}
which was defined in the context of boosting. This sample compression scheme is of variable size:
It compresses samples of size $m$ to subsamples of size 
$O(d\log m)$.

For many natural and important families of concept classes, sample compression
schemes of size equal the VC dimension were constructed, 
revealing connections between sample compression schemes
and other fields such as combinatorics, geometry, model
theory, and algebraic topology
(e.g.~\cite{DBLP:conf/colt/Floyd89,DBLP:journals/siamcomp/HelmboldSW92,DBLP:journals/dam/Ben-DavidL98,chernikovS,DBLP:journals/jcss/RubinsteinBR09,RR3,DBLP:conf/colt/LivniS13}).
Despite this rich body of work, the refined question whether there exists a compression scheme whose size is equal or linear in the VC dimension remains open. 

\cite{DBLP:journals/ml/FloydW95}
observed that in order to prove the conjecture it suffices
to consider only maximal classes (A class $C$ is maximal if no
concept can be added without increasing the VC dimension).
Furthermore, they constructed sample compression schemes of size $d$ for every 
{\em maximum class} of VC dimension $d$. 
These classes are maximum in the sense that their size equals
an upper bound (due to Sauer-Shelah) on the size of any concept class of VC dimension $d$. 
Later, \cite{DBLP:journals/jmlr/KuzminW07} and \cite{RR3} provided even more
efficient and combinatorially elegant sample compression schemes for maximum classes 
that are called unlabeled compression schemes because the
labels of the subsample are not needed to encode the output hypothesis.

One possibility of making a progress
on Floyd and Warmuth's question is by extending the optimal
compression schemes for maximum classes to a more general family.
In this paper we consider a natural and rich generalization of maximum classes 
which are known by the name extremal classes (or shattering extremal classes). 
Similar to maximum classes, these classes are defined when a
certain inequality known as The Sandwich Theorem is tight. 
This inequality generalizaes the
Sauer-Shelah bound.
The Sandwich Theorem as well as extremal classes were discovered several times and independently by several groups of researchers
and in several contexts such as Functional analysis~\citep{Pajor}, Discrete-geometry~\citep{Law}, Phylogenetic Combinatorics~\citep{Dress1,Dress2} and Extremal Combinatorics~\citep{BR89,BR95}.
Even though a lot of knowledge regarding the structure of extremal classes
has been accumulated,
the understanding of these classes is still considered incomplete 
by several authors~\citep{BR95,Greco98,S-ext}. 

\subsection*{Our results}

Our main result is a construction of sample compression scheme of size $d$ 
for every extremal class of VC dimension $d$. 
When the concept class is maximum, then our scheme
specializes to the compression scheme for maximum
classes given in \citep{DBLP:journals/ml/FloydW95}.
Our generalized sample compression scheme for extremal
classes is still easy to describe.
However its analysis requires more
combinatorics and heavily exploits the rich structure of extremal classes.
Despite being more general, the construction is simple.
We also give explicit examples of maximal classes that are
extremal but not maximum (see Example~\ref{ex:26}). This means that the compression
scheme presented here is not implied by the previous
sample compression schemes for maximum classes, not even implicitly.

We also discuss a certain greedy peeling method for
producing an unlabeled compressions scheme. 
Such schemes were first conjectured in \citep{DBLP:journals/jmlr/KuzminW07}
and later proven to exist for maximum classes \citep{RR3}.
However the existence of such schemes for extremal classes
remains open. We relate the existence of such schemes to basic open
questions concerning the combinatorial structure of extremal classes.

\subsection*{Organization}

In Section~\ref{sec:extremal} we give some preliminary definitions and define extremal
classes. We also discuss some basic properties and give some examples of extremal classes which demonstrate their generality over maximum classes. 
In Section~\ref{sec:labeled}
we give a labeled compression scheme for any extremal class of VC dimension $d$. Finally, in Section~\ref{sec:unlabeled} we relate unlabeled compression schemes for extremal classes with basic open questions concerning extremal classes.

\section{Extremal Classes}
\label{sec:extremal}
\subsection{Preliminaries}

\paragraph{Concepts, concept classes, and the one-inclusion
graph.}
A concept $c$ is a mapping from some domain to $\{0,1\}$.
We assume for the sake of simplicity that the domain of $c$
(denoted by $\dom(c)$) is finite and allow the case that $\dom(c)=\emptyset$.
A concept $c$ can also be viewed as a characteristic function of
a subset of $\dom(c)$, i.e for any domain point $x\in \dom(c)$,
$c(x) = 1$ iff $x \in c$.
A concept class $C$ is a set of concepts with
the same domain (denoted by $\dom(C)$).
A concept class can be represented
by a binary table (see Fig.\ \ref{fig:example}),
where the rows correspond to concepts and the columns to the elements of $\dom(C)$.
Whenever the elements in $\dom(C)$ are clear from the context, then we
represent concepts as bit strings of length $|\dom(C)|$
(See Fig.\ \ref{fig:example}).

The concept class $C$ can also be represented as a 
subgraph of the Boolean hypercube with 
$\lvert \dom(C)\rvert$ dimensions.
Each dimension corresponds to a particular domain element,
the vertices are the concepts in $C$ 
and two concepts are connected with an edge if they
disagree on the label of a single element (Hamming distance 1).
This graph is called the {\em one-inclusion graph}
of $C$ \citep{DBLP:journals/iandc/HausslerLW94}. Note that each edge is naturally
labeled by the single dimension/element
on which the incident concepts disagree
(See Fig.\ \ref{fig:example}). 
\begin{figure}[t]
\begin{tabular}{ll}
\begin{minipage}[h]{.24\textwidth}
\begin{tabular}{r|cc cc cc}
&$x_1$&$x_2$&$x_3$&$x_4$&$x_5$&$x_6$\\
\hline\hline
$c_1$&0&0&0&0&0&0\\

$c_{2}$&0&0&1&0&0&0\\
$c_{3}$&0&1&0&0&0&0\\
$c_{4}$&1&0&0&0&0&0\\

$c_{5}$&0&0&1&0&1&0\\
$c_{6}$&0&0&1&1&0&0\\
$c_{7}$&1&0&1&0&0&0\\
$c_{8}$&1&1&0&0&0&0\\

$c_{9}$&0&0&1&0&1&1\\
$c_{10}$&0&0&1&1&1&0\\
$c_{11}$&0&0&1&1&0&1\\
$c_{12}$&1&0&1&1&0&0\\
$c_{13}$&1&1&1&0&0&0\\
$c_{14}$&1&1&0&1&0&0\\

$c_{15}$&0&0&1&1&1&1\\
$c_{16}$&1&0&1&1&0&1\\
$c_{17}$&1&1&1&1&0&0\\
$c_{18}$&1&0&1&1&1&1
\end{tabular}
\end{minipage}
&
\begin{minipage}[h]{0.85\textwidth}
\vspace{-1.2cm}
\begin{center}
\includegraphics[width=.8\textwidth]{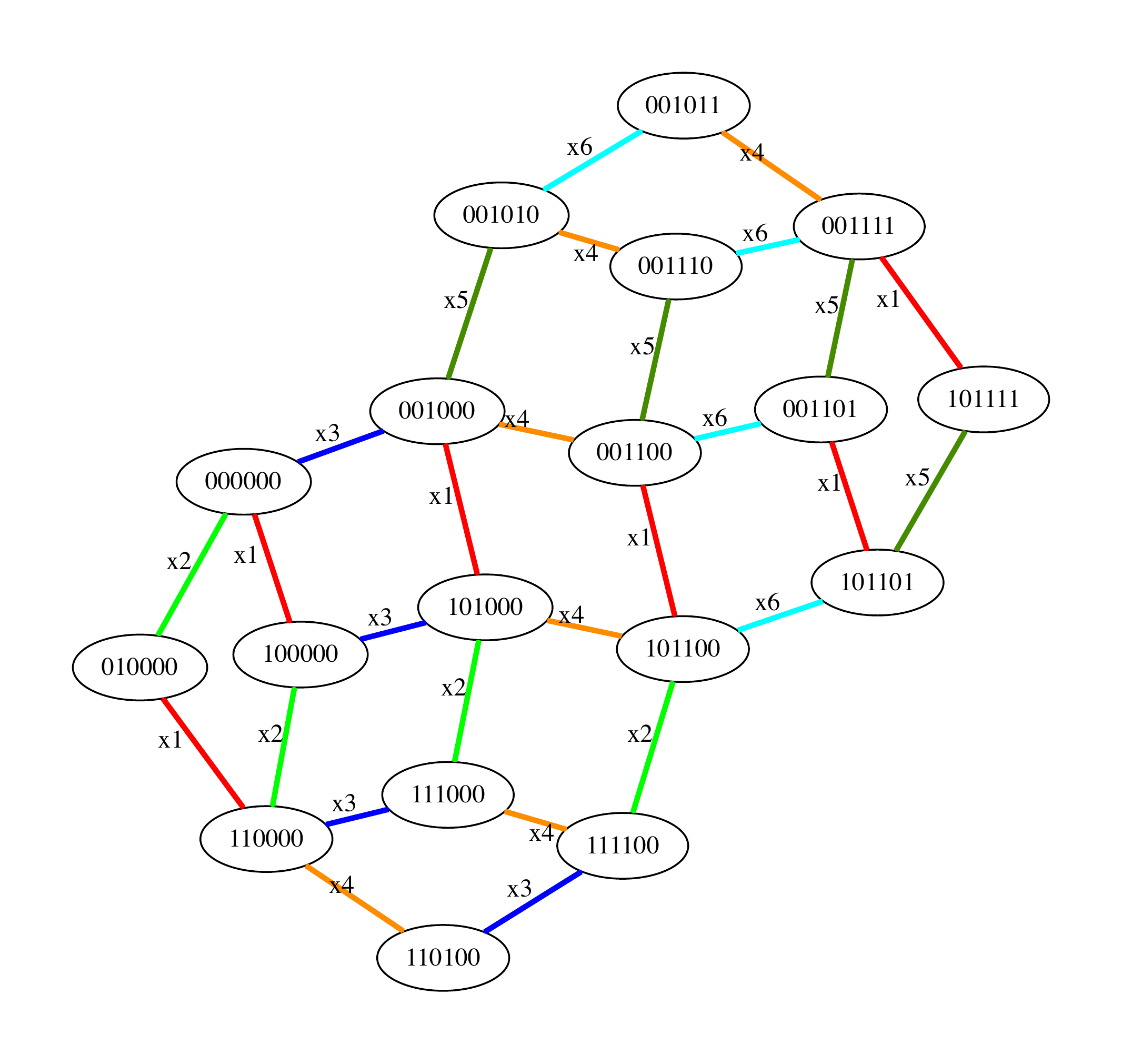}
\end{center}
\vspace{-1cm}
\end{minipage}
\end{tabular}
\caption{Table and one-inclusion graph of an extremal class
$C$ of VC dimension 2. The reduction
$C^{x_2}=\{00000,10000,11000,11100\}$ has the domain
$\{x_1,x_3,x_4,x_5,x_6\}$. Note that each
concept in $C^{x_2}$ corresponds to an edge labeled $x_2$.
Similarly $C^{\{x_3,x_4\}}$ consists of the single concept $\{1100\}$
over the domain $\{x_1,x_2,x_5,x_6\}$. Note that this
concept corresponds to the single cube of $C$ with
dimension set $\{x_3,x_4\}$.}
\label{fig:example}
\end{figure}

\paragraph{Restrictions and samples.}         
We denote the \textit{restriction/sample} of a concept $c$ 
onto $S \subseteq \dom(c)$ as $c|S$. 
This concept has the restricted domain $S$ 
and labels this domain consistently with $c$.
Essentially concept $c|S$ is obtained by removing 
from row $c$ in the table all columns not in $S$.
The {\em restriction/set of samples} of an entire class $C$ onto $S\subseteq
\dom(C)$ is denoted as $C|S$.
A table for $C|S$ is produced by simply removing all columns not in $S$ 
from the table for $C$ and collapsing identical rows.%
\footnote
{We define $c|\emptyset = \emptyset$. Note that $C|\emptyset = \{\emptyset\}$ if 
$C \neq \emptyset$ and $\emptyset$ otherwise.}
Also the one-inclusion graph for the restriction $C|S$ 
is now a subgraph of the Boolean hypercube with $|S|$  dimensions
instead of the full dimension $|\dom(C)|$.
We also use $C-S$ as shorthand for $C|(\dom(C)\setminus S)$ 
(since the columns labeled with $S$ are removed from the table). 
Note that the sub domain $S\subseteq \dom(C)$ induces an equivalence class
on $C$: Two concepts $c,c'\in C$ are
equivalent iff $c|S=c'|S$. Thus there is one equivalence class
per concept of $C|S$.

\paragraph{Cubes.}
A concept class $B$ is called a cube 
if for some subset $S$ of the domain $\dom(B)$,
the restriction $B|S$ is the set of all $2^{|S|}$ concepts 
over the domain $S$ and the class $B-S$ contains a single concept. We denote this
single concept by $\mathrm{tag}(B)$. 
In this case, we say that $S$ is the dimension set of $B$
(denoted as $\dim(B)$).
For example, if $B$ contains two concepts that are incident
to an edge labeled $x$ then $B$ is a cube with $\dim(B)=\{x\}$.
We say that $B$ {\em is a cube of} concept class $C$ 
if $B$ is a cube that is a subset of $C$.
We say that $B$ is a maximal cube of $C$ if there exists no
other cube of $C$ which strictly contains $B$.
When the dimensions are clear from the context, then a
concept is described as a bit string of length $\dom(C)$.
Similarly a cube, $B$, is described as an expression in
$\{0,1,*\}^{|\dom(C)}$, where the dimensions of $\dim(B)$
are the *'s and the remaining bits is the concept ${\mathrm tag}(B)$.

\paragraph{Reductions.}
In addition to the restriction it is common to define a second
operation on concept classes. We will describe this
operation using cubes. 
The \textit{reduction} $C^S$ is a concept class
on the domain $\dom(C)\setminus S$ which has one concept
per cube with dimensions set $S$
$$C^S:=\{\mathrm{tag}(B) : B\mbox{ is a cube of $C$ such that } \dim(B)=S\}.$$
The reduction with respect to a single dimension $x$ is denoted
as $C^x$. See Fig.\ \ref{fig:example} for some examples.

\paragraph{Shattering and strong shattering.}         
There are two important properties associated with subsets
$S$ of the domain of $C$. We say that $S\subseteq \dom(C)$ is {\em
shattered} by $C$, if $C|S$ is the set of all $2^{|S|}$ concepts 
over the domain $S$.
Furthermore, $S$ is {\em strongly shattered} by $C$, 
if $C$ has a cube with dimensions set
$S$. We use $\s(C)$ to denote all shattered sets of $C$
and $\st(C)$ to denote all strongly shattered sets,
respectively.
Clearly, both $\str(C)$ and $\sstr(C)$ are closed under the subset relation, 
and $\sstr(C)\subseteq\str(C)$.
         
The following theorem is the result of accumulated work 
by different authors, and parts of it were rediscovered 
independently several times \citep{Pajor,BR95,Dress1,ShatNews}. 
\vspace{-.05in}
\begin{theorem}[Sandwich Theorem]
\label{thm:sandwich}
Let $C$ be a concept class.
\vspace{-0.09in}
$$\lvert \st(C)\rvert \leq \lvert C\rvert \leq \lvert \s(C)\rvert.$$	
\end{theorem}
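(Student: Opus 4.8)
The plan is to prove both inequalities simultaneously by induction on $n = \lvert \dom(C)\rvert$, peeling off a single coordinate $x \in \dom(C)$ and relating $C$ to the two smaller classes living on $\dom(C)\setminus\{x\}$, namely the restriction $C-x$ and the reduction $C^x$. The base case $n=0$ is immediate: then $C$ is either empty or the single empty concept, and in both cases $\lvert \st(C)\rvert = \lvert C\rvert = \lvert \s(C)\rvert$. The engine of the induction is one counting identity together with careful bookkeeping of shattered and strongly shattered sets, split according to whether or not they contain $x$.

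First I would establish the counting identity
\[
\lvert C\rvert = \lvert C-x\rvert + \lvert C^x\rvert,
\]
which holds because collapsing the column $x$ merges exactly those pairs of concepts that form an edge labeled $x$, and these edges are precisely the concepts tagged by $C^x$. Next I would analyze $\s(C)$. A set $S$ with $x\notin S$ is shattered by $C$ if and only if it is shattered by $C-x$, since projecting $C$ to $S$ equals projecting $C-x$ to $S$; and a set $S'\cup\{x\}$ is shattered by $C$ if and only if every pattern on $S'$ is realized by an $x$-edge, i.e.\ if and only if $S'$ is shattered by $C^x$. Both pieces are exact, yielding the clean identity
\[
\lvert \s(C)\rvert = \lvert \s(C-x)\rvert + \lvert \s(C^x)\rvert.
\]

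For $\st(C)$ the coordinate containing $x$ again behaves exactly: a cube of $C$ with dimension set $S'\cup\{x\}$ is the same data as a cube of $C^x$ with dimension set $S'$ (its $x$-edges), giving a bijection between $\{S\in\st(C):x\in S\}$ and $\st(C^x)$. For the sets avoiding $x$, however, I only expect the inclusion $\{S\in\st(C):x\notin S\}\subseteq \st(C-x)$: a cube of $C$ missing direction $x$ does project to a cube of $C-x$, but a cube of $C-x$ need not lift back, because its preimages in $C$ may take mixed values on $x$ and fail to assemble into a sub-cube. This asymmetry is exactly what turns the upper bound into an identity-driven argument while leaving the lower bound a genuine inequality, and pinning down this projection/lifting correspondence for cubes is where I expect the real work to lie. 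The inclusion gives
\[
\lvert \st(C)\rvert \le \lvert \st(C-x)\rvert + \lvert \st(C^x)\rvert.
\]

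Finally I would close the induction. Applying the inductive hypothesis to $C-x$ and to $C^x$, both on domains of size $n-1$, gives $\lvert \st(C-x)\rvert \le \lvert C-x\rvert \le \lvert \s(C-x)\rvert$ and likewise for $C^x$. Combining the strong-shattering inequality with the counting identity yields $\lvert \st(C)\rvert \le \lvert C-x\rvert + \lvert C^x\rvert = \lvert C\rvert$, and combining the shattering identity with the same counting identity yields $\lvert C\rvert = \lvert C-x\rvert + \lvert C^x\rvert \le \lvert \s(C)\rvert$. The main obstacle, as noted, is the cube lifting/projection argument underlying the strong-shattering step; everything else is routine bookkeeping.
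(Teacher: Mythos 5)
Your inductive strategy itself is sound, but one of your two ``clean identities'' is false. You claim that $S'\cup\{x\}$ is shattered by $C$ if and only if every pattern on $S'$ is realized by an $x$-edge, hence $\lvert \s(C)\rvert = \lvert \s(C-x)\rvert + \lvert \s(C^x)\rvert$. The ``only if'' half is wrong: shattering $S'\cup\{x\}$ only guarantees that the patterns $(p,0)$ and $(p,1)$ are each realized by \emph{some} concept of $C$, and these two witnesses may disagree outside $S'\cup\{x\}$, in which case no $x$-edge realizing $p$ exists. Concretely, take $C=\{00,11\}$ on domain $\{x,y\}$ with $x$ the first coordinate: $\{x\}$ is shattered by $C$, yet $C^x=\emptyset$, so $\lvert \s(C)\rvert = 3$ while $\lvert \s(C-x)\rvert + \lvert \s(C^x)\rvert = 2+0 = 2$. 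Fortunately, the inequality your induction actually needs is the true direction: $S\mapsto S$ injects $\s(C-x)$ into the shattered sets of $C$ avoiding $x$ (this part of your argument is exact), and $S'\mapsto S'\cup\{x\}$ injects $\s(C^x)$ into the shattered sets of $C$ containing $x$ (this is the true, ``if'', half of your equivalence); the images are disjoint, giving $\lvert \s(C-x)\rvert + \lvert \s(C^x)\rvert \le \lvert \s(C)\rvert$. With that correction the chain $\lvert C\rvert = \lvert C-x\rvert + \lvert C^x\rvert \le \lvert \s(C-x)\rvert + \lvert \s(C^x)\rvert \le \lvert \s(C)\rvert$ closes the upper bound. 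Your lower-bound half is correct as stated: the correspondence between cubes of $C$ with dimension set $S'\cup\{x\}$ and cubes of $C^x$ with dimension set $S'$ really is a bijection on the level of strongly shattered sets, and the sets in $\st(C)$ avoiding $x$ do inject into $\st(C-x)$. So the proof is repaired by demoting your shattering ``identity'' to the one-sided inequality.

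Note also that your route, once fixed, is genuinely different from the paper's. The paper does not induct on the domain at all: it repeatedly applies down-shifting steps (replace $c$ by its $x$-neighbour when $c(x)=1$ and that neighbour is absent), observing that each step preserves $\lvert C\rvert$, can only shrink $\s(C)$, and can only grow $\st(C)$, and that the process terminates in a downward-closed class for which $\lvert \st(C)\rvert = \lvert C\rvert = \lvert \s(C)\rvert$ holds trivially. The shifting argument gets both inequalities at once from a monotone deformation, with no case analysis on cubes; your induction via $C-x$ and $C^x$ is more local and makes explicit exactly which correspondences are bijections and which are only injections --- and it reuses the same restriction/reduction machinery the paper deploys elsewhere for extremal classes.
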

This theorem has been discovered independently several
times and has several proofs
(see~\citep{DBLP:journals/corr/abs-1211-2980} for more
details). One approach, which is also used in proving the
Sauer-Shelah Lemma~\citep{sauer,shelah}, is via
\emph{down-shifting}. We now sketch this
approach. In a down-shifting step we pick
a dimension $x\in\dom(C)$, and every $c\in C$ is replaced by its $x$-neighbor 
(i.e. the concept $c'$ which disagrees with $c$ only on $x$)
if the following conditions hold: (i) $c(x)=1$, and (ii) the $x$-neighbour
of $c$ does not belong to $C$.
One can easily verify that if $C'$ is obtained from $C$ by a down-shifting step, then $|C'|=|C|$, $\str(C')\subseteq\str(C)$,
and $\sstr(C')\supseteq\sstr(C)$.
Eventually, after enough down-shifting steps have been 
performed\footnote{In fact one step on each $x\in X$
suffices~\citep{DBLP:journals/corr/abs-1211-2980}.}
the resulting class becomes downward-closed (see Example \ref{ex:down} below). For such classes
the cardinality of $\str(C),\sstr(C),$ and $|C|$ are all equal. This implies
the inequalities in the Sandwich Theorem for the original class.
         
The inequalities in this theorem can be strict: Let $C\subseteq\{0,1\}^n$ 
be such that $C$ contains all boolean vectors with an even number of $1's$.
Then $\sstr(C)$ contains only the empty set and
$\str(C)$ contains all subsets of $\{1,\ldots,n\}$ of size at most $n-1$.
Thus in this example, $|\sstr(C)|=1$, $|C|=2^{n-1}$, and $|\str(C)|=2^{n}-1$.
         
The \emph{VC dimension}~\citep{VC1,zbMATH04143473} is defined as: 
\vspace{-0.1in}
$$
\VCdim(C) = \max\{|S|:S\in \str(C)\}.
$$
         
Note that by the definition of the VC-dimension:
\vspace{-0.08in}
$$\str(C)\subseteq\{S\subseteq \dom(C)~:~\lvert S\rvert\leq \VCdim(C) \}.$$
Hence, an easy consequence of Theorem \ref{thm:sandwich} is
that for every concept class $C$, 
we have $|C|\leq \sum_{i=0}^{\VCdim(C)}{{|\dom(C)|}\choose{i}}$. This is the well-known Sauer-Shelah Lemma~\citep{sauer,shelah}.
\subsection{Definition of extremal classes and examples}
Maximum classes are defined as concept classes which satisfy the Sauer-Shelah inequality with equality.
Analogously, \emph{extremal classes} are defined as concept
classes which satisfy the inequalities\footnote{There are
two inequalities in the Sandwich Theorem, but every class
which satisfies one of them with equality also satisfies
the other with equality (See Theorem~\ref{thm:extrchar}).} in the Sandwich Theorem with equality:
A concept class $C$ is \emph{extremal} if 
for every shattered set $S$ of $C$
there is a cube of $C$ with dimension set $S$, i.e.  $\s(C)=\st(C).$
Note that complementing the bits in a column of the table representing
$C$ does not affect the sets $\s(C)$, $\st(C)$ and
extremality is preserved. Also in the one inclusion graph, only 
the labels of the vertices are affected by such column complementations.

Every maximum class is an extremal class. 
Moreover, maximum classes of VC dimension $d$ are precisely the extremal classes 
for which the shattered sets consist of 
all subsets of the domain of size up to $d$.
The other direction does not hold - there are extremal classes that are not maximum.
All the following examples are extremal but not maximum.
\begin{example}
Consider the concept class $C$ over the domain
$\{x_1,\ldots,x_6\}$ given in Fig.\ \ref{fig:example}.
In this example
\begin{align*}
\sstr(C)=\str(C)= &\big\{\emptyset,\{x_1\},\{x_2\},\{x_3\},\{x_4\},\{x_5\},\{x_6\},\\
			  &\{x_1,x_2\},\{x_1,x_3\},\{x_1,x_4\},\{x_1,x_5\},\{x_1,x_6\},\{x_2,x_3\},\\
			  &\{x_2,x_4\},\{x_3,x_4\},\{x_4,x_5\},\{x_4,x_6\},\{x_5,x_6\}\big\}.
\end{align*}
This example also demonstrates the cubical structure of extremal classes.
\end{example}
\begin{example}
{\bf (Downward-closed classes)}\\
A standard example of a maximum class of VC dimension $d$
is $$C=\{c\in\{0,1\}^n: \mbox{ the number of $1$'s in $c$ is at most $d$}\}.$$
This is simply the hamming ball of radius $d$ around the all $0$'s concept.
A natural generalization of such classes are downward closed classes. 
We say that $C$ is downward closed
if for all $c\in C$ and for all $c'\leq c$, also $c'\in C$.
Here $c'\leq c$ means that for every $x\in \dom(C)$, $c'(x)\leq c(x)$.
It is not hard to verify that every downward closed class is extremal.
\label{ex:down}
\end{example}
\begin{example}
{\bf (Hyper-planes arrangements in a convex domain)} \\
Another standard set of examples for maximum classes comes
from geometry (see e.g.~\citep{Welzl}).
Let $H$ be an arrangement of hyperplanes in $\mathbb{R}^{d}$. 
For each hyperplane $p_i\in H$, pick one of half-planes determined
by $p_i$ to be its {\it positive side} and the other its {\it negative side}. 
The hyperplanes of $H$ cut $\R^d$ into open regions ({\it cells}). 
Each cell defines a binary mapping with domain $H$:
$$c(p_i)=\begin{cases}
1 & \mbox{if }c\mbox{ is in the positive side of }p_i\\
0 & \mbox{if }c\mbox{ is in the negative side of }p_i\,.
\end{cases}$$
It is known that if the hyperplanes are in general
position, then the set $C$ of all cells is a maximum class of VC dimension $d$.
         
Consider the following generalization of these classes:
Let $K\subseteq\mathbb{R}^d$ be a convex set. Instead of taking the vectors corresponding to 
all of the cells, take only those that correspond to cells that intersect $K$:
$$C_K =\{c : c~ \mbox{ corresponds to a cell that intersects }K\}.$$
$C_K$ is extremal. In fact, for $C_K$ to be extremal it is not even required that the
hyperplanes are in general position. It suffices to require that no $d+1$ hyperplanes
have a non-empty intersection (e.g. parallel hyperplanes are allowed).
Fig.\ \ref{fig:hyper} illustrates such a class $C_K$ in the plane.
These classes were studied in~\citep{DBLP:journals/corr/abs-1211-2980}.
\end{example}
\begin{figure}
\begin{center}
\includegraphics[width=.375\textwidth,height=.2\textheight]{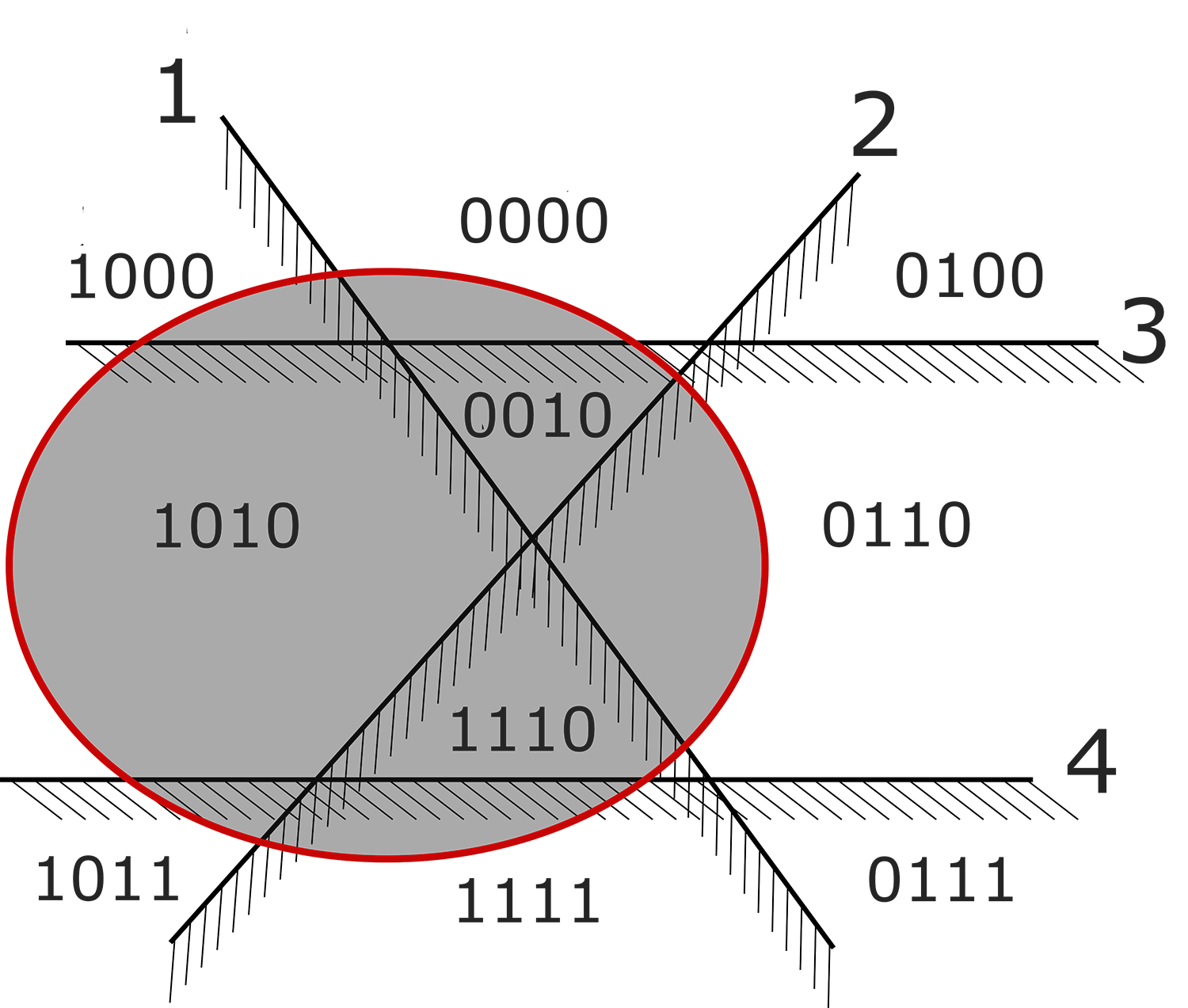}
\end{center}
\caption{An extremal class that correspond to the cells of a 
hyperplane arrangement of a convex set.
An arrangement of $4$ lines is given which partitions the
plane to $10$ cells.
Each cell corresponds to a binary vector which specifies its location relative to the lines.
For example the cell corresponding to $1010$ is on the positive sides of lines $1$ and $3$ and
on the negative side of lines $3$ and $4$.
Here the convex set $K$
is an ellipse and the extremal concept class consisting of
the cells the ellipse intersects is $C_K=\{1000,1010,1011,1111,1110,0010,0000,0110\}$
(the cells $0100,0111$ are not intersected by the ellipse).
The class $C_K$ here has VC dimension $2$. Note that it's shattered sets of size
$2$ are exactly the pairs of lines whose intersection point lies in the ellipse $K$.}
\label{fig:hyper}
\end{figure}

Interestingly, extremal classes also arise in the
context of graph theory:
\begin{example}\label{ex:25}
{\bf (Edge-orientations which preserve connectivity~\citep{DBLP:journals/combinatorics/KozmaM13})}\\
Let $G=(V,E)$ be an undirected simple graph
and let $\overrightarrow{E}$ be a fixed reference orientation.
Now an arbitrary orientation of $E$ is a function $d:E\rightarrow\{0,1\}$: 
If $d(e)=0$ then $e$ is oriented as in $\overrightarrow{E}$
and if $d(e)=1$ then $e$ is oriented opposite to $\overrightarrow{E}$.
Now let $s,t\in V$ be two fixed vertices, 
and consider all orientations of $E$ for which there exists a directed path
from $s$ to $t$. The corresponding class of orientations
$E\rightarrow \{0,1\}$ is 
an extremal concept class over the domain $E$. 
\end{example}
Moreover, the extremality of this class yields the following result in graph theory:
The number of orientations for which there exists a
directed path from $s$ to $t$ equals the number of
subgraphs for which there exists an undirected path from
$s$ to $t$. For a more thorough discussion and other
examples of extremal classes related to graph orientations 
see~\citep{DBLP:journals/combinatorics/KozmaM13}.
\begin{example}\label{ex:26} {\bf (A general construction of a maximal class that is
extremal but not maximum)}\\
Take a $k$-dimensional cube and glue to each of its vertices an edge of a new distinct dimension.
The resulting class has $2^{k+1}$ concepts and $n=2^k+k$ dimensions.
Let $C$ be the complement of that class.
\begin{claim}\label{lem:maximalextremal}
$C$ is an extremal maximal class of VC dimension $n-2$ which is not maximum.
\end{claim}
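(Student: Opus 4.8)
The plan is to analyze the pre-complement class directly and then transfer everything to $C$ through a single duality between cubes and shattered sets under complementation. Write $D$ for the ``glued cube'': over the domain $X=\{y_1,\dots,y_k\}\cup\{z_v:v\in\{0,1\}^k\}$ (so $|X|=n=2^k+k$), let $D$ consist of the $2^k$ \emph{base} concepts $(v,\mathbf 0)$ (all $z$-coordinates $0$) together with the $2^k$ \emph{tip} concepts $(v,e_v)$ (all $z$-coordinates $0$ except $z_v=1$), for $v\in\{0,1\}^k$; thus $|D|=2^{k+1}$ and $C=\{0,1\}^n\setminus D$. First I would compute $\s(D)$ and $\st(D)$. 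The base concepts form a full $k$-cube, so every subset of $\{y_1,\dots,y_k\}$ is strongly shattered; the glued edge $\{(v,\mathbf 0),(v,e_v)\}$ shows each singleton $\{z_v\}$ is strongly shattered; and one checks directly that nothing else is even shattered, since any $\{y_i,z_v\}$ fails because $z_v=1$ forces $y_i=v_i$, and any $\{z_v,z_{v'}\}$ fails because no concept has two $z$-coordinates equal to $1$. Hence $\s(D)=\st(D)$ equals the family of all subsets of $\{y_1,\dots,y_k\}$ together with all singletons $\{z_v\}$, which has $2^k+2^k=2^{k+1}=|D|$ elements, so $D$ is extremal.

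The key step is a complementation duality: for any class $E$ on domain $X$, a set $S$ is shattered by $\{0,1\}^n\setminus E$ if and only if $X\setminus S\notin\st(E)$. Indeed $S\in\s(\{0,1\}^n\setminus E)$ means every pattern on $S$ is attained outside $E$, i.e.\ no fiber $\{c:c|S=p\}$ lies entirely inside $E$, i.e.\ $E$ has no cube with dimension set $X\setminus S$. Applying this with $E=D$ makes the complement map $S\mapsto X\setminus S$ a bijection between $\s(C)$ and the sets \emph{not} in $\st(D)$, whence $|\s(C)|=2^n-|\st(D)|=2^n-2^{k+1}=|C|$; by the characterization that one Sandwich equality forces the other (Theorem~\ref{thm:extrchar}) this proves $C$ is extremal. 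The same bijection yields $\VCdim(C)=n-\min\{|T|:T\notin\st(D)\}$, and since every singleton lies in $\st(D)$ while the size-$2$ set $\{y_1,z_v\}$ does not, the minimum is $2$ and $\VCdim(C)=n-2$.

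For ``not maximum'' I would compare $|C|=2^n-2^{k+1}$ with the Sauer--Shelah bound for VC dimension $n-2$, namely $\sum_{i=0}^{n-2}\binom{n}{i}=2^n-n-1$; since $2^{k+1}>2^k+k+1=n+1$ for $k\ge2$, we get $|C|<2^n-n-1$, so equality fails and $C$ is not maximum. The part I expect to be the main obstacle is \textbf{maximality}: I must show that adding any $c\in D$ raises the VC dimension to $n-1$. Here the duality is used again at the level of a single deletion: for $x=z_v$, the patterns missing from $C|(X\setminus\{x\})$ are exactly the tags of the $x$-edges of $D$, i.e.\ the reduction $D^{z_v}$. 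The crucial structural observation is that each direction $z_v$ carries a \emph{unique} edge of $D$ (the glued edge), so $C|(X\setminus\{z_v\})$ misses exactly one pattern, namely the common restriction of both $(v,\mathbf 0)$ and $(v,e_v)$ to $X\setminus\{z_v\}$. Since every $c\in D$ is a base or tip concept attached to some vertex $v$, its restriction to $X\setminus\{z_v\}$ equals precisely this missing pattern; adding $c$ therefore fills the gap and shatters the $(n-1)$-set $X\setminus\{z_v\}$, forcing $\VCdim(C\cup\{c\})\ge n-1>\VCdim(C)$. As this holds for every $c\in D$, the class $C$ is maximal.
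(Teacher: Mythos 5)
Your proof is correct and follows essentially the same route as the paper's: both analyze the glued-cube class $D=\bar{C}$ and rest on the complementation duality (a set is shattered by $C$ iff its complement is not strongly shattered by $D$) — the paper cites this as a lemma while you re-derive it — to obtain extremality and $\VCdim(C)=n-2$, both use the uniqueness of each glued edge for maximality, and both compare cardinalities against the Sauer--Shelah bound for non-maximumness. Your explicit restriction to $k\geq 2$ in the ``not maximum'' step is a welcome precision that the paper leaves implicit (for $k=1$ the class is in fact maximum).
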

We prove this claim in Appendix~\ref{app:maximalextremal}.
Note that $|C|=2^n-2^{k+1}=2^{2^k+k}-2^{k+1}$ and maximum
classes of VCdim $d=n-2$ over $n$ dimensions 
have size $2^n-n-1=2^{2^k+k}-2^k-k-1$. 
So the maximum classes of VCdim $n-2$ are by $2^k-k-1$ larger than the
constructed extremal maximal class of VCdim $n-2$.

%

\end{example}

\subsection{Basic properties of extremal classes}
Extremal classes have a rich combinatorial structure (See~\citep{DBLP:journals/corr/abs-1211-2980} and references within for more details).
We discuss some of parts which are relevant to compression schemes.
         
The following theorem provides alternative
characterizations of extremal classes:
\begin{theorem}[\cite{BR95,Dress2}]\label{thm:extrchar}
The following statements are equivalent:
\vspace{-0.1in}
	\begin{enumerate}
	\item{$C$ is extremal, i.e.  $s(C)=\st(C)$.}
        \item{$|s(C)|=|\st(C)|$.}
	\item{$\lvert \sstr(C)\rvert=\lvert C\rvert$.}
	\item{$\lvert C\rvert=\lvert \str(C)\rvert$.} 
	\item{$\{0,1\}^n\setminus C$ is extremal.}
	\end{enumerate}
\end{theorem}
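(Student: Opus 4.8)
The plan is to reduce everything to one genuinely combinatorial step by first disposing of the easy implications and then using a complementation duality. Since every strongly shattered set is shattered, $\st(C)\subseteq\s(C)$ always; as both families are finite, equality of cardinalities forces equality of the families, so $(1)\Leftrightarrow(2)$. By the Sandwich Theorem $|\st(C)|\le|C|\le|\s(C)|$, so $(2)$ squeezes the middle term and yields $|\st(C)|=|C|=|\s(C)|$, i.e. both $(3)$ and $(4)$. What is left is the climb back: from $(3)$ or $(4)$ to $(2)$, that is, showing that tightness in one Sandwich inequality forces tightness in the other; and establishing $(5)$.

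First I would prove a duality lemma, with $\bC=\{0,1\}^n\setminus C$ and $n=|\dom(C)|$: for every $S\subseteq\dom(C)$, $S\in\st(\bC)$ iff $\dom(C)\setminus S\notin\s(C)$. This is immediate from the definitions: $S\in\st(\bC)$ means $\bC$ contains a cube with dimension set $S$, i.e. there is an assignment to the coordinates in $\dom(C)\setminus S$ whose subcube lies entirely inside $\bC$ and hence misses $C$; equivalently, that assignment is a pattern on $\dom(C)\setminus S$ realized by no concept of $C$, which says exactly $\dom(C)\setminus S\notin\s(C)$. Counting gives $|\st(\bC)|=2^n-|\s(C)|$ and, applied to $\bC$, also $|\s(\bC)|=2^n-|\st(C)|$; subtracting, $|\s(C)|-|\st(C)|=|\s(\bC)|-|\st(\bC)|$. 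Thus $C$ is extremal iff $\bC$ is, which (with $(1)\Leftrightarrow(2)$) is $(5)$. Moreover these identities convert $(3)$ for $C$ into $(4)$ for $\bC$ and $(4)$ for $C$ into $(3)$ for $\bC$, so it suffices to prove one transfer implication---say $(4)\Rightarrow(2)$---for all classes, and the duality supplies the other.

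For the transfer I would induct on $n$, deleting a coordinate $x$ and working with the restriction $C-x$, the reduction $C^x$, and the two fibers $C_0,C_1$ (the concepts with $c(x)=0$, resp. $c(x)=1$, restricted to $\dom(C)\setminus\{x\}$), so that $C-x=C_0\cup C_1$, $C^x=C_0\cap C_1$ and $|C|=|C-x|+|C^x|$. Shattering splits as $|\s(C)|=|\s(C-x)|+|\s(C_0)\cap\s(C_1)|$, and strong shattering as $|\st(C)|=|\{S\in\st(C):x\notin S\}|+|\st(C^x)|$, where $\{S\in\st(C):x\notin S\}\subseteq\st(C-x)$. Re-running the Sandwich bound through the first recursion, $(4)$ (i.e. $|C|=|\s(C)|$) forces $\s(C-x)=\s(C_0)\cup\s(C_1)$ together with $|\s(C_0)|=|C_0|$ and $|\s(C_1)|=|C_1|$; the inductive hypothesis then makes $C_0$ and $C_1$ extremal, and the goal is to reassemble $\s(C)=\st(C)$ from the extremality of these subclasses.

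I expect the reassembly to be the main obstacle. The recursion controlling $\s$ is phrased through the fibers $C_0,C_1$, whereas the recursion controlling $\st$ is phrased through the different subclasses $C-x$ and $C^x$, so matching the two sides is not purely formal. The delicate ingredient is the cube-lifting inclusion $\{S\in\st(C):x\notin S\}\subseteq\st(C-x)$, which can be strict---a cube of $C-x$ need not lift to a cube of $C$ when its points are forced to opposite $x$-values---and one must show it becomes an equality precisely under the tightness hypothesis. Establishing this, and reconciling $\st(C_0)\cap\st(C_1)$ with $\st(C^x)=\st(C_0\cap C_1)$ on the $x$-containing part (where only the inclusion $\st(C_0\cap C_1)\subseteq\st(C_0)\cap\st(C_1)$ is automatic), is the heart of the argument. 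Once it is done, the duality lemma propagates $(4)\Rightarrow(2)$ to the remaining equivalences and completes the proof.
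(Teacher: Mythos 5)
Your preliminary steps are sound: $(1)\Leftrightarrow(2)$ follows from the inclusion $\st(C)\subseteq\s(C)$, $(2)\Rightarrow(3),(4)$ follows from the Sandwich Theorem, and your duality lemma ($S\in\st(\bC)$ iff $\dom(C)\setminus S\notin\s(C)$) is correct, as are its consequences: $(1)\Leftrightarrow(5)$, and the reduction of $(3)\Rightarrow(2)$ to a single transfer implication $(4)\Rightarrow(2)$. (This duality is exactly the lemma the paper itself invokes in Appendix~A.) But the proposal is not a proof: the transfer implication $(4)\Rightarrow(2)$ --- which is the entire mathematical content of the theorem, namely that tightness in one Sandwich inequality forces tightness in the other --- is explicitly left open. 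You set up the induction, correctly derive from $|C|=|\s(C)|$ that $C_0,C_1$ are extremal and that $\s(C-x)=\s(C_0)\cup\s(C_1)$, and then stop, stating that the cube-lifting equality $\{S\in\st(C):x\notin S\}=\st(C-x)$ and the reconciliation of $\st(C_0)\cap\st(C_1)$ with $\st(C^x)$ are ``the heart of the argument.'' An argument that names its missing step still lacks that step; everything you do establish is the routine part.

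The gap is fillable, and the reason you got stuck is identifiable: you pushed only the fiber decomposition $|C|=|C_0|+|C_1|$ through the Sandwich bound. Push the second decomposition $|C|=|C-x|+|C^x|$ through as well, using $|\s(C)|=|\s(C-x)|+|\s(C_0)\cap\s(C_1)|$ and $\s(C^x)=\s(C_0\cap C_1)\subseteq\s(C_0)\cap\s(C_1)$: under $(4)$ this chain is also forced tight, so $C-x$ and $C^x$ themselves satisfy $(4)$ --- hence by induction both are \emph{extremal} --- and moreover $\s(C^x)=\s(C_0)\cap\s(C_1)$. With all four derived classes extremal, the reassembly you feared never requires lifting cubes. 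For sets avoiding $x$: $\{S\in\st(C):x\notin S\}=\st(C_0)\cup\st(C_1)$ holds by definition (a cube whose dimension set avoids $x$ lies in a single fiber), and this equals $\s(C_0)\cup\s(C_1)=\s(C-x)=\{S\in\s(C):x\notin S\}$ by extremality of $C_0,C_1$ and your forced union identity. For sets containing $x$: $S\in\st(C)$ iff $S\setminus\{x\}\in\st(C^x)=\s(C^x)=\s(C_0)\cap\s(C_1)$, which holds iff $S\in\s(C)$. Hence $\st(C)=\s(C)$. Note that the problematic direction $\st(C-x)\subseteq\{S\in\st(C):x\notin S\}$, which is false for general classes, is bypassed here precisely because extremality of $C-x$ lets you replace $\st(C-x)$ by $\s(C-x)$, where the splitting into fibers is automatic. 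For the record, the paper does not prove this theorem (it cites Bollob\'as--Radcliffe and Dress), so there is no internal proof to compare against; but as written your submission leaves the main implication unproved.
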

The following theorem shows that the property of ``being an
extremal class'' is preserved under standard
operations. It was also proven independently by several authors
(e.g.~\citep{BR95,Dress2}).
\begin{theorem}
Let $C$ be any extremal class, $S\subseteq \dom(C)$,
and $B$ be any cube such that $\dom(B)=\dom(C)$. 
Then $C-S$ and $C^S$ are extremal concept classes over the domain
$\dom(C)-S$ and $B\cap C$ is an extremal concept class over
the domain $\dom(C)$.
\label{thm:operations}
\end{theorem}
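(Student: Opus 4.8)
The plan is to reduce all three assertions to single-coordinate statements and then induct, in each case exploiting a ``squeeze'' between the two inequalities of Theorem~\ref{thm:sandwich}. Fix a coordinate $x\in\dom(C)$ and introduce the shorthand $\s^x(C):=\{T\subseteq\dom(C)\setminus\{x\}: T\cup\{x\}\in\s(C)\}$, and $\st^x(C)$ analogously. Partitioning the (strongly) shattered sets according to whether they contain $x$ gives $|\s(C)|=|\s(C-x)|+|\s^x(C)|$ and $|\st(C)|=|\{T\in\st(C):x\notin T\}|+|\st^x(C)|$, using that a set avoiding $x$ is shattered by $C$ iff it is shattered by $C-x$. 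I would then record three one-sided containments, each a direct check from the definitions of cube, restriction and reduction: $\s(C^x)\subseteq\s^x(C)$ (an $x$-edge realizing a pattern on $T$ realizes both $x$-values there, so $T\cup\{x\}$ is shattered); $\st^x(C)\subseteq\st(C^x)$ (the $x$-edges of a $(T\cup\{x\})$-cube form a $T$-cube in $C^x$); and $\{T\in\st(C):x\notin T\}\subseteq\st(C-x)$ (a $T$-cube whose tag fixes $x$ restricts to a $T$-cube of $C-x$). Combining the two partitions with these containments yields $|\s(C-x)|+|\s(C^x)|\le|\s(C)|$ and $|\st(C)|\le|\st(C-x)|+|\st(C^x)|$.

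Now the squeeze: $|\s(C-x)|+|\s(C^x)|\le|\s(C)|=|\st(C)|\le|\st(C-x)|+|\st(C^x)|\le|\s(C-x)|+|\s(C^x)|$, where the middle equality is extremality of $C$ and the final step is Theorem~\ref{thm:sandwich} applied to $C-x$ and to $C^x$. Since the two ends coincide, every inequality is tight; in particular $|\st(C-x)|=|\s(C-x)|$ and $|\st(C^x)|=|\s(C^x)|$, so both $C-x$ and $C^x$ are extremal. For a general $S$ I would iterate: $C-S$ is an iterated single-coordinate restriction, and $C^S=(\cdots(C^{x_1})^{x_2}\cdots)$ because a square (a $\{x,y\}$-cube) is exactly two $y$-adjacent $x$-edges, giving $(C^x)^y=C^{\{x,y\}}$; extremality is preserved at each step.

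For $B\cap C$, I would first complement columns (which preserves extremality and leaves $\s,\st$ unchanged) to assume $B$ fixes a set $G\subseteq\dom(C)$ of coordinates to $0$, and then build $B\cap C$ by slicing on one coordinate of $G$ at a time; since a constant column affects neither $\s$ nor $\st$, it suffices to prove that each slice $D:=\{c|(\dom(C)\setminus\{x\}):c\in C,\ c(x)=0\}$ is extremal. Writing $D'$ for the $x=1$ slice (both over $Y:=\dom(C)\setminus\{x\}$), two identities drive the argument: $\s(D)\cap\s(D')=\s^x(C)$ (for every $T$, all $T$-patterns are realized in both slices iff $T\cup\{x\}$ is shattered by $C$) and $\s(D)\cup\s(D')\subseteq\s(C-x)$ (shattering inside a slice implies shattering after deleting $x$). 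Hence $|\s(D)|+|\s(D')|=|\s(D)\cup\s(D')|+|\s^x(C)|\le|\s(C-x)|+|\s^x(C)|=|\s(C)|=|C|=|D|+|D'|$, using the partition of $\s(C)$ and extremality of $C$. As Theorem~\ref{thm:sandwich} gives $|\s(D)|\ge|D|$ and $|\s(D')|\ge|D'|$, both must be equalities, so by Theorem~\ref{thm:extrchar} the slice $D$ is extremal; induction on $|G|$ then finishes $B\cap C$.

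The main obstacle is precisely the one-sided nature of the relationships between the (strongly) shattered sets of the derived classes and those of $C$: the naive equalities one is tempted to write, such as $\st(C-x)=\{T\in\st(C):x\notin T\}$ or $\s(C^x)=\s^x(C)$, genuinely fail for non-extremal classes (the two-element class $\{00,11\}$ already breaks the first). The argument must therefore commit to exactly the correct inequality direction in each containment and then rely on the two-sided Sandwich inequality together with the extremality of $C$ to force the whole chain to collapse to equalities. A secondary point to verify carefully is the order-independence and composition of reductions, $(C^x)^y=C^{\{x,y\}}$, which is what legitimizes passing from single coordinates to a general set $S$ in the reduction case.
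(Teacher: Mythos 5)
Your proof is correct, but there is nothing in the paper to compare it against: the paper does not prove Theorem~\ref{thm:operations} at all, it quotes the result from the literature (\citep{BR95,Dress2}). Judged on its own, your argument is a valid, self-contained derivation, essentially the standard counting proof from those sources. The three one-sided containments you isolate, $\s(C^x)\subseteq\s^x(C)$, $\st^x(C)\subseteq\st(C^x)$, and $\{T\in\st(C):x\notin T\}\subseteq\st(C-x)$, are each correct (and you are right that the tempting equalities fail in general, as your $\{00,11\}$ example shows), and the chain
$|\s(C-x)|+|\s(C^x)|\le|\s(C)|=|\st(C)|\le|\st(C-x)|+|\st(C^x)|\le|\s(C-x)|+|\s(C^x)|$
does force component-wise equality via the Sandwich Theorem, which by Theorem~\ref{thm:extrchar} gives extremality of $C-x$ and $C^x$; the composition identities $(C-x)-y=C-\{x,y\}$ and $(C^x)^y=C^{\{x,y\}}$ needed to pass to general $S$ hold for arbitrary classes, as you claim. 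For $B\cap C$, the reduction via column complementation and removal of constant columns is legitimate (the paper itself notes that complementation preserves $\s$, $\st$ and extremality), and the slice lemma, resting on $\s(D)\cap\s(D')=\s^x(C)$, $\s(D)\cup\s(D')\subseteq\s(C-x)$, $|C|=|D|+|D'|$, and the Sandwich lower bound applied to each slice, is airtight. One stylistic remark: the first squeeze can be shortened by replacing the two partitions of $\s(C)$ and $\st(C)$ with the cardinality identity $|C|=|C-x|+|C^x|$ (valid for every class, since the concepts collapsing under restriction are exactly the $x$-edges), giving $|C-x|+|C^x|=|\st(C)|\le|\st(C-x)|+|\st(C^x)|\le|C-x|+|C^x|$ and hence extremality by item~3 of Theorem~\ref{thm:extrchar}; this is how the argument is usually run in the cited sources, but that is a matter of economy, not a gap in what you wrote.
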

Note that if $C$ is maximum then $C-S$ and $C^S$ are also
maximum, but $B\cap C$ is not necessarily maximum.
This is an example of the advantage extremal classes have
over the more restricted notion of maximum classes.
         
Interestingly, the fact that extremal classes are preserved under intersecting with cubes yields
a rather simple proof 
(communicated to us by Ami Litman) 
of the fact that every extremal class is ``distance preserving''. 
This property also holds for maximum classes~\citep{Welzl}, 
however the proof for extremal classes is
much simpler than the previous proof for maximum classes (given in~\citep{Welzl}):
\begin{theorem}[\cite{Greco98}]\label{thm:isom}
Let $C$ be any extremal class. Then for every
$c_0,c_1\in C$, the distance between $c_0$ and $c_1$
in the one-inclusion graph of $C$ equals the hamming distance between $c_1$ and $c_2$.
\end{theorem}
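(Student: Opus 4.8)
Write $d_G$ for distance in the one-inclusion graph of $C$ and $d_H$ for Hamming distance. The inequality $d_G(c_0,c_1)\ge d_H(c_0,c_1)$ is immediate, since every edge of the one-inclusion graph changes exactly one coordinate, so a path of length $\ell$ can alter at most $\ell$ coordinates. The plan is therefore to prove the reverse inequality by induction on $k=d_H(c_0,c_1)$, and the inductive step reduces to the following \emph{neighbor lemma}: if $c_0\neq c_1$ are concepts of an extremal class $C$, then there is a coordinate $x$ on which $c_0$ and $c_1$ disagree such that the $x$-neighbor of $c_0$ again lies in $C$. Granting this, the $x$-neighbor $c_0'$ satisfies $d_H(c_0',c_1)=k-1$, so by the induction hypothesis $d_G(c_0',c_1)=k-1$, whence $d_G(c_0,c_1)\le 1+d_G(c_0',c_1)=k$, which together with the trivial bound gives equality.

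To prove the neighbor lemma I would first use cube-intersection to normalize the situation. Let $S$ be the set of coordinates on which $c_0$ and $c_1$ disagree, and let $B$ be the cube with $\dom(B)=\dom(C)$, dimension set $S$, and tag equal to the common restriction of $c_0,c_1$ to $\dom(C)\setminus S$; then $c_0,c_1\in B$. By Theorem~\ref{thm:operations}, $C\cap B$ is extremal, and by the same theorem its restriction $D:=(C\cap B)|S$ to the domain $S$ is extremal as well. Since all concepts of $C\cap B$ agree outside $S$, its one-inclusion graph is isomorphic to that of $D$, and $c_0,c_1$ map to a pair of antipodal points $a,\bar a\in D$. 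In this picture the neighbor lemma becomes the assertion that $a$ is not an isolated vertex of the one-inclusion graph of $D$: any neighbor of $a$ flips some coordinate $x\in S$ (all coordinates of $D$ lie in $S$), and such a neighbor pulls back to the $x$-neighbor of $c_0$ inside $C\cap B\subseteq C$, automatically reducing the Hamming distance to $c_1$.

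Thus the whole theorem reduces to the crux: \emph{an extremal class with at least two concepts has no isolated vertex in its one-inclusion graph.} I would prove this by induction on $n=|\dom(D)|$. Suppose $v\in D$ is isolated, so no coordinate-flip of $v$ stays in $D$. If some coordinate $x$ has fiber $F=\{c\in D:c(x)=v(x)\}$ of size at least two, then $F$ is extremal (intersect with the cube $\{c:c(x)=v(x)\}$ and restrict away $x$, both by Theorem~\ref{thm:operations}), has fewer coordinates, contains $v$, and keeps $v$ isolated, contradicting the induction hypothesis. Otherwise every such fiber is the singleton $\{v\}$, which forces every other concept to disagree with $v$ on every coordinate; hence $D=\{v,\bar v\}$ with $v,\bar v$ antipodal, and for $n\ge 2$ this class has no edges and no nontrivial cubes, so $|\st(D)|=1<2=|D|$, contradicting extremality via Theorem~\ref{thm:extrchar}.

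The main obstacle, and the only place where extremality is genuinely used beyond closure under the standard operations, is this last step of ruling out isolated vertices. The cube-intersection reduction is what makes it tractable: it turns the global statement about geodesics into the purely local statement that a single vertex has nonzero degree in a smaller extremal class, where the characterization $|\st(D)|=|D|$ applies directly. I would finally double-check the bookkeeping that both half-cube intersections and restrictions preserve extremality (two applications of Theorem~\ref{thm:operations}) and that dropping constant coordinates leaves the one-inclusion graph, and hence all distances, unchanged.
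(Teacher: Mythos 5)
Your proof is correct, but it is organized differently from the paper's. The paper argues by taking a counterexample pair $c_0,c_1$ of \emph{minimal} Hamming distance, intersecting $C$ with the minimal cube $B$ spanning them, and using minimality to show directly that $B\cap C=\{c_0,c_1\}$: any third concept $c\in B\cap C$ would be strictly closer to both endpoints, so minimality yields geodesics from $c_0$ to $c$ and from $c$ to $c_1$ that concatenate into a geodesic from $c_0$ to $c_1$. The contradiction is then immediate, since $B\cap C$ is extremal (Theorem~\ref{thm:operations}) yet consists of two concepts with no edge, so $|\st(B\cap C)|=1<2=|B\cap C|$, violating Theorem~\ref{thm:extrchar}. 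You replace this single minimal-counterexample stroke by two nested inductions: an outer induction on Hamming distance, reduced via the same cube-intersection-plus-restriction normalization to a neighbor lemma, and an inner induction on the domain size (with the fiber case analysis) establishing that an extremal class with at least two concepts has no isolated vertex. Both arguments bottom out in the identical contradiction --- a two-point antipodal extremal class cannot exist --- and both lean on exactly Theorems~\ref{thm:operations} and~\ref{thm:extrchar}, so the ingredients are the same; what differs is the decomposition. The paper's version is shorter because the global minimality hypothesis does the work of both of your inductions at once. Your version costs an extra lemma but isolates a reusable local fact (minimum degree at least $1$ in any extremal class with two or more concepts), which is a weak form of the connectivity that the paper itself later invokes, via Theorem~\ref{thm:isom}, in the proof of Lemma~\ref{lem:maxcub}; note, though, that your lemma alone would not suffice for that later use, where full connectivity is needed.
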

\begin{proof}
Assume towards contradiction that this is not the case.
Among all possible pairs of $c_0,c_1\in C$ for which there is no
such path, pick a pair $c_0,c_1$ of a minimal hamming distance.
Let $B$ be the minimal cube over the domain $\dom(C)$ which contains both
$c_0$ and $c_1$. So the dimensions set of $B$ is $\dim(B)=\{x:c_0(x)\neq c_1(x)\}$ and $|\dim(B)|\geq 2$.

We first claim that
by the minimality criteria according to which $c_0,c_1$
were chosen, there cannot be any other concept in $B\cap C$ except $c_0$ 
and $c_1$.
Without loss of generality assume that for all $x\in \dim(B)$, $c_0(x)=0$
and $c_1(x)=1$ (Otherwise we can flip the bits of 
entire columns without affecting the distances between concepts).
If there was now another concept $c\in B$,
then $c|\dim(B)$ must have at least one $0$ and at least one
$1$. By the minimality according to which $c_0,c_1$
were chosen -- there must exist a path between $c_0$
and $c$ of length equal the number of 1's in $c|\dim(B)$.
Similarly, there must exist a path between $c$ and $c_1$
of length equal the number of 0's. The combined path would
be the length of the Hamming distance between $c_0$ and $c_1$. 
So by the minimality $B\cap C$ does not contain another concept.
Therefore $B\cap C=\{c_0,c_1\}$ and this completes the
proof of the claim.

Next we observe that by Theorem \ref{thm:operations}, $B\cap C$ must be extremal.
However we claim that $B\cap C$ is not extremal:
Since there is no edge between its two
concepts $c_0$ and $c_1$, $\st(B\cap C)=\{\emptyset\}$
and therefore $|\st(C)|=1<2=|C|$. This means that
$B\cap C$ is not extremal which is a contradition.
\end{proof}

The following lemma brings out the special cubical structure
of extremal classes. We will use it
to prove the correctness of the compression
scheme given in the following section.
It shows that if $B_1$ and $B_2$ are two maximal cubes of an extremal class $C$ 
then their dimensions sets $\dim(B_1)$ and $\dim(B_2)$
are incomparable.
\begin{lemma}\label{lem:maxcub}
Given $B_1$ and $B_2$ are two cubes of an extremal class $C$. 
If $B_1$ is maximal, then
$$\dim(B_1)\subseteq\dim(B_2)\implies B_1=B_2.$$
\end{lemma}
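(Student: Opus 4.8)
The plan is to pass to the reduction $C^{S_1}$, where $S_1=\dim(B_1)$, and to read off \emph{both} the maximality of $B_1$ and the presence of $B_2$ as statements about this smaller extremal class. Write $S_2=\dim(B_2)$ and $t_1=\mathrm{tag}(B_1)$, so that $S_1\subseteq S_2$. By Theorem~\ref{thm:operations} the reduction $C^{S_1}$ is again an extremal class, now over the domain $\dom(C)\setminus S_1$, and by definition of the reduction $t_1\in C^{S_1}$.

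First I would lift $B_2$ into $C^{S_1}$. Since $S_1\subseteq S_2$ and $B_2$ is free on all of $S_2$, fixing the coordinates in $S_2\setminus S_1$ to any value $a$ yields a sub-cube of $B_2\subseteq C$ whose dimension set is exactly $S_1$; its tag is therefore a concept of $C^{S_1}$. Letting $a$ range over all assignments to $S_2\setminus S_1$ produces a cube $\hat B_2\subseteq C^{S_1}$ with $\dim(\hat B_2)=S_2\setminus S_1$. In particular, if $S_1\subsetneq S_2$ then $\lvert C^{S_1}\rvert\geq\lvert\hat B_2\rvert\geq 2$.

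The heart of the argument is to use the maximality of $B_1$ to force $\lvert C^{S_1}\rvert=1$. The key translation is that a neighbor of $t_1$ in the one-inclusion graph of $C^{S_1}$, say in direction $z\in\dom(C)\setminus S_1$, corresponds to a second cube $B'\subseteq C$ with $\dim(B')=S_1$ whose tag differs from $t_1$ only on the coordinate $z$. Because the two tags agree off $z$, the union $B_1\cup B'$ is then a cube of $C$ with dimension set $S_1\cup\{z\}$ that strictly contains $B_1$, contradicting the maximality of $B_1$. Hence $t_1$ must be an isolated vertex of $C^{S_1}$. But $C^{S_1}$ is extremal, so by Theorem~\ref{thm:isom} its one-inclusion graph is distance preserving and therefore connected, and a connected graph on at least two vertices has no isolated vertex; thus $\lvert C^{S_1}\rvert=1$.

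Finally, $\lvert C^{S_1}\rvert=1$ together with $\hat B_2\subseteq C^{S_1}$ forces $\lvert\hat B_2\rvert=1$, i.e. $S_2\setminus S_1=\emptyset$, so $S_1=S_2$; and the unique element of $C^{S_1}$ is simultaneously $\mathrm{tag}(B_1)$ and $\mathrm{tag}(B_2)$, so $B_1$ and $B_2$ share both their dimension set and their tag, giving $B_1=B_2$. The step I expect to be the main obstacle is the third one: correctly relating the global statement ``$B_1$ is a maximal cube of $C$'' to the purely local statement ``$t_1$ has no neighbor in $C^{S_1}$'', and checking that $B_1\cup B'$ really is a cube. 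The remaining steps are bookkeeping together with the two cited structural facts, namely preservation of extremality under reduction and connectivity of extremal classes.
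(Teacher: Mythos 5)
Your proposal is correct and is essentially the paper's own argument: both pass to the reduction $C^{\dim(B_1)}$, use that reductions of extremal classes are extremal (Theorem~\ref{thm:operations}) and hence connected (Theorem~\ref{thm:isom}), and observe that an edge of $C^{\dim(B_1)}$ incident to $\mathrm{tag}(B_1)$ would expand to a cube of $C$ strictly containing $B_1$, contradicting maximality. The only difference is packaging: the paper derives the contradiction directly from a path between $\mathrm{tag}(B_1)$ and the cube $B_2^{\dim(B_1)}$, whereas you first conclude $\lvert C^{\dim(B_1)}\rvert=1$ and then read off $B_1=B_2$; the ingredients and their roles are identical.
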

\begin{proof}
Assume towards contradiction that
$\dim(B_2)\supseteq\dim(B_1):=D$ and $B_2\ne B_1$.
The cube $B_1^{D}$ contains the single concept $\mathrm{tag}(B_1)$ in $C^{D}$, and the cube $B_2^{D}$ is a cube of $C^D$ with dimensions set $\dim(B_2)\setminus D$.
Since $C^{D}$ is extremal, $C^{D}$ must be connected (by Theorem \ref{thm:isom}).
Therefore there is a path in $C^D$ between the concept $\mathrm{tag}(B_1)$ and
some concept in the cube $B_2^{D}$.
This means there is some edge $e$ incident to the concept $B_1^{D}$ in $C^D$. 
This edge $e$ is a one-dimensional cube of $C^D$ labeled
with some dimension $x$. This cube with dimension set
$\dim(e)=\{x\}$ expands to a cube of $C$ with dimension
set $\dim(B_1)\cup\{x\}$ which contains the cube $B_1$ of $C$.
This contradicts the maximality of cube $B_1$ of $C$.
\end{proof}

One concise way to represent an extremal class
is as the union of its maximal cubes. With this
representation, the extremal class of Fig.\ \ref{fig:compr}
is described by the expression
$$*\!*\!0\!*\!00+1\!*\!*\!*\!00+1101\!\!*\!0+01010*,$$ 
where ``$+$'' stands for union.
Note that the dimension sets of the cubes are marked as *'s
and for the class to be extremal, the dimension sets must be
incomparable.

\section{A labeled compression scheme for extremal classes} \label{sec:labeled}

\begin{algorithm}[H]
Let $C$ be an extremal class.
The compression map:
\begin{itemize}
\item
Input: A sample $s$ of $C$.
\item
Output: A subsample $s'=s|\dim(B)$, where $B$ is 
any maximal cube of $C|\dom(s)$ that contains the sample $s$.
\end{itemize}
The reconstruction map:
\begin{itemize}
\item
Input: A sample $s'$ of size at most $\VCdim(C)$.
\item
Output: 
Any concept $h$ which is consistent with $s'$ on $\dom(s')$ and 
belongs to a cube $B$ of $C$ with dimensions set $\dom(s')$.
\end{itemize}
\caption{\label{algo:label} \bf (A labeled compression scheme for extremal classes)}
\end{algorithm}

Let $C$ be a concept class.
On a high level, a sample compression scheme for $C$ compresses 
every sample of $C$ to a subsample of size at most $k$
and this subsample represents a hypothesis on the entire
domain of $C$ that must be consistent with the original sample.
More formally, a labeled compression scheme of size $k$ for $C$
consists of a compression map $\kappa$ and a reconstruction map $\rho$.
The domain of the compression map consists of all samples
from concepts in $C$: 
For each sample $s$, $\kappa$ compresses it to a subsample $s'$
of size at most $k$.
The domain of the reconstruction function $\rho$ is the set of all samples
of $C$ of size at most $k$.
Each such sample is used by $\rho$ to reconstruct a concept
$h$ with $\dom(h)=\dom(C)$. 
The sample compression scheme must satisfy that for all
samples $s$ of $C$,
$$\rho(\kappa(s))\:|\dom(s) = s.$$
The sample compression scheme is said to be proper if the reconstructed hypothesis $h$
always belongs to the original concept class $C$. 

A proper labeled compression scheme for extremal classes 
of size at most the VC dimension 
is given in Algorithm~\ref{algo:label}.
Let $C$ be an extremal concept class and
$s$ be a sample of $C$.
In the compression phase the algorithm finds any {\em maximal cube} $B$
of $C|\dom(s)$ that contains the sample $s$ 
and compresses $s$ to the subsample determined by the dimensions set of that maximal cube.
Note that the size of the dimension set (and the
compression scheme) is bounded by the VC dimension.

How should we reconstruct?
Consider all concepts of $C$ that are consistent with the sample $s$:
$$H_s=\{h\in C: h|\dom(s)=s\}.$$
Correctness means that we need to reconstruct to one of those concepts.
Let $s'$ be the input for the reconstruction function and let $D:=\dom(s')$.
During the reconstruction, the domain $\dom(s)$ of the original sample $s$ is not known.
All that is known at this point is that
$D$ is the dimensions set of a maximal cube $B$ of $C\proj
\dom(s)$
that contained the sample $s$. 
The reconstruction map of the algorithm outputs a concept in the following set:
\begin{align*}
H_B:=&\{h\in C: h \text{ lies in cube } B' \text{such that}\\
&\dim(B')=\dim(B) \text{ and }h|\dim(B)=s|\dim(B)\}.
\end{align*}
For the correctness of the compression scheme
it suffices to show that for all choices of the maximal cube $B$ of $C\proj\dom(s)$, 
$H_B$ is non-empty and a subset of $H_s$.
The following Lemma guarantees the non-emptiness.
\begin{lemma}\label{lem:crucial1}
Let $C$ be an extremal class and let $D\subseteq\dom(C)$
be the dimensions set of some cube of $C\proj \dom(s)$.
Then $D$ is also the dimensions set of some cube of $C$.
\end{lemma}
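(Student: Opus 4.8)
The plan is to reinterpret the statement purely in terms of (strong) shattering and then invoke extremality twice. First I would observe that a set $D$ is the dimensions set of some cube of a class $A$ precisely when $A$ strongly shatters $D$: a cube of $A$ with dimensions set $D$ is exactly a witness for $D\in\st(A)$, and conversely. Hence the hypothesis reads $D\in\st(C\proj\dom(s))$ and the desired conclusion is $D\in\st(C)$. Note also that since $D$ is the dimensions set of a cube of $C\proj\dom(s)$, we automatically have $D\subseteq\dom(s)$.

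Next I would move from strong shattering to ordinary shattering inside the restriction. Since $C\proj\dom(s)=C-(\dom(C)\setminus\dom(s))$, Theorem~\ref{thm:operations} tells us that $C\proj\dom(s)$ is again extremal. Extremality means $\st=\s$, so $D\in\st(C\proj\dom(s))$ immediately gives $D\in\s(C\proj\dom(s))$; that is, the restriction shatters $D$.

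The heart of the argument is that shattering of a subset $D\subseteq\dom(s)$ is insensitive to first restricting to $\dom(s)$: because restriction is transitive, $(C\proj\dom(s))\proj D=C\proj D$, so $C\proj\dom(s)$ shatters $D$ if and only if $C$ shatters $D$. This yields $D\in\s(C)$. Finally, applying extremality of $C$ itself (again $\s(C)=\st(C)$) gives $D\in\st(C)$, which is exactly the claim that $D$ is the dimensions set of a cube of $C$.

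The main subtlety — and the reason I route through shattered sets rather than trying to lift the cube directly — is that a cube of $C\proj\dom(s)$ with dimensions set $D$ only forces a constant tag on $\dom(s)\setminus D$, whereas a cube of $C$ demands a constant tag on the strictly larger complement $\dom(C)\setminus D$; there is no reason a single cube in the restriction extends to a cube in $C$. Passing to shattered sets sidesteps this obstacle, since shattering constrains only the coordinates in $D$ and says nothing about the complement, and the two applications of extremality are precisely what bridge the ``local'' cube condition and the ``global'' one.
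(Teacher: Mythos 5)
Your proof is correct and follows essentially the same route as the paper's: from a cube of $C\proj\dom(s)$ with dimensions set $D$, conclude $D$ is shattered by $C\proj\dom(s)$, hence shattered by $C$ (since $(C\proj\dom(s))\proj D = C\proj D$), hence strongly shattered by $C$ via extremality of $C$. The only difference is that you invoke Theorem~\ref{thm:operations} (extremality of the restriction) to pass from $D\in\st(C\proj\dom(s))$ to $D\in\s(C\proj\dom(s))$, which is heavier than needed: the containment $\st(A)\subseteq\s(A)$ holds trivially for every concept class $A$, so extremality is only needed for $C$ itself, exactly as in the paper.
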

\begin{proof}
Clearly the dimension set $D$ is shattered by $C\proj
\dom(s)$ 
and therefore it is also shattered by $C$. By the extremality of $C$,
$D$ is also strongly shattered by it, and thus
there exists a cube $B$ of $C$ with dimensions set $D$.
\end{proof}
The second lemma show that for each choice of the maximal
cube $B$, $H_B\subseteq H_s$.
\begin{lemma}\label{lem:crucial2}
Let $s$ be a sample of an extremal class $C$,
let $B$ be any maximal cube of $C|\dom(s)$ that contains
$s$, and let $D$ denote the dimensions set of $B$.
Then for any cube $B'$ of $C$ with $\dim(B')=D$, the concept $h\in B'$ that is consistent
with $s$ on $D$ is also consistent with $s$ on $\dom(s)\setminus D$.
\end{lemma}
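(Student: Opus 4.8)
The plan is to show that the maximal cube $B$ of $C|\dom(s)$ coincides with the restriction of $B'$ to $\dom(s)$; once these two cubes are identified, both consistency assertions drop out by comparing tags. First I would record what cube membership gives. Since $B$ is a cube of $C|\dom(s)$ with dimension set $D$ and $s\in B$, the concept $s$ agrees with $\mathrm{tag}(B)$ on every coordinate outside $D$, that is $s|(\dom(s)\setminus D)=\mathrm{tag}(B)$. Likewise, as $h\in B'$ and $B'$ is a cube of $C$ with $\dim(B')=D$, we have $h|(\dom(C)\setminus D)=\mathrm{tag}(B')$, and restricting further to the smaller set $\dom(s)\setminus D\subseteq\dom(C)\setminus D$ gives $h|(\dom(s)\setminus D)=\mathrm{tag}(B')|(\dom(s)\setminus D)$. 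So the whole statement reduces to the single equality $\mathrm{tag}(B')|(\dom(s)\setminus D)=\mathrm{tag}(B)$.

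The crux is to produce this equality by recognizing two cubes as the same object. Because $D\subseteq\dom(s)$, restricting the cube $B'$ of $C$ to $\dom(s)$ yields $B'|\dom(s)$, which is again a cube: all $2^{|D|}$ patterns on $D$ survive (the coordinates of $D$ are free), while the coordinates in $\dom(s)\setminus D$ are fixed to $\mathrm{tag}(B')|(\dom(s)\setminus D)$, so its dimension set is exactly $D$ and its tag is $\mathrm{tag}(B')|(\dom(s)\setminus D)$. Since $B'\subseteq C$ gives $B'|\dom(s)\subseteq C|\dom(s)$, the cube $B'|\dom(s)$ is a cube of $C|\dom(s)$ with dimension set $D=\dim(B)$. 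Now $C|\dom(s)$ is itself extremal by Theorem~\ref{thm:operations}, and $B$ is by hypothesis a maximal cube of it. As $\dim(B)=D\subseteq D=\dim(B'|\dom(s))$, Lemma~\ref{lem:maxcub} applied inside the extremal class $C|\dom(s)$ forces $B=B'|\dom(s)$. Comparing tags then yields $\mathrm{tag}(B)=\mathrm{tag}(B')|(\dom(s)\setminus D)$, which is precisely the equality needed. Chaining the two displays gives $h|(\dom(s)\setminus D)=\mathrm{tag}(B')|(\dom(s)\setminus D)=\mathrm{tag}(B)=s|(\dom(s)\setminus D)$, so $h$ agrees with $s$ off $D$, as claimed.

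The one step I expect to require care is the verification that $B'|\dom(s)$ is genuinely a cube of $C|\dom(s)$ whose dimension set is \emph{exactly} $D$, since Lemma~\ref{lem:maxcub} can only be invoked once the two dimension sets are seen to be comparable (here in fact equal). This hinges on the inclusion $D\subseteq\dom(s)$ together with the definition of a cube. Everything else—the reductions to the tag equality, the extremality of the restriction, and the incomparability of maximal cubes—is either bookkeeping about restricting tags or a direct appeal to Theorem~\ref{thm:operations} and Lemma~\ref{lem:maxcub}.
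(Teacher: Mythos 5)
Your proof is correct and follows essentially the same route as the paper's own argument: both identify $B'|\dom(s)$ as a cube of the extremal class $C|\dom(s)$ with dimension set $D$, invoke Theorem~\ref{thm:operations} and Lemma~\ref{lem:maxcub} against the maximality of $B$ to force $B'|\dom(s)=B$, and then read off the consistency of $h$ with $s$ outside $D$. Your explicit bookkeeping with tags is just a more detailed phrasing of the paper's observation that $h|\dom(s)$ and $s$ lie in the same cube $B$.
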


\begin{figure}
\label{fig:compr}
\begin{tabular}{cc}
\hspace{-2.5cm}
\begin{minipage}[h]{.8\textwidth}
\includegraphics[width=\textwidth,angle=0]{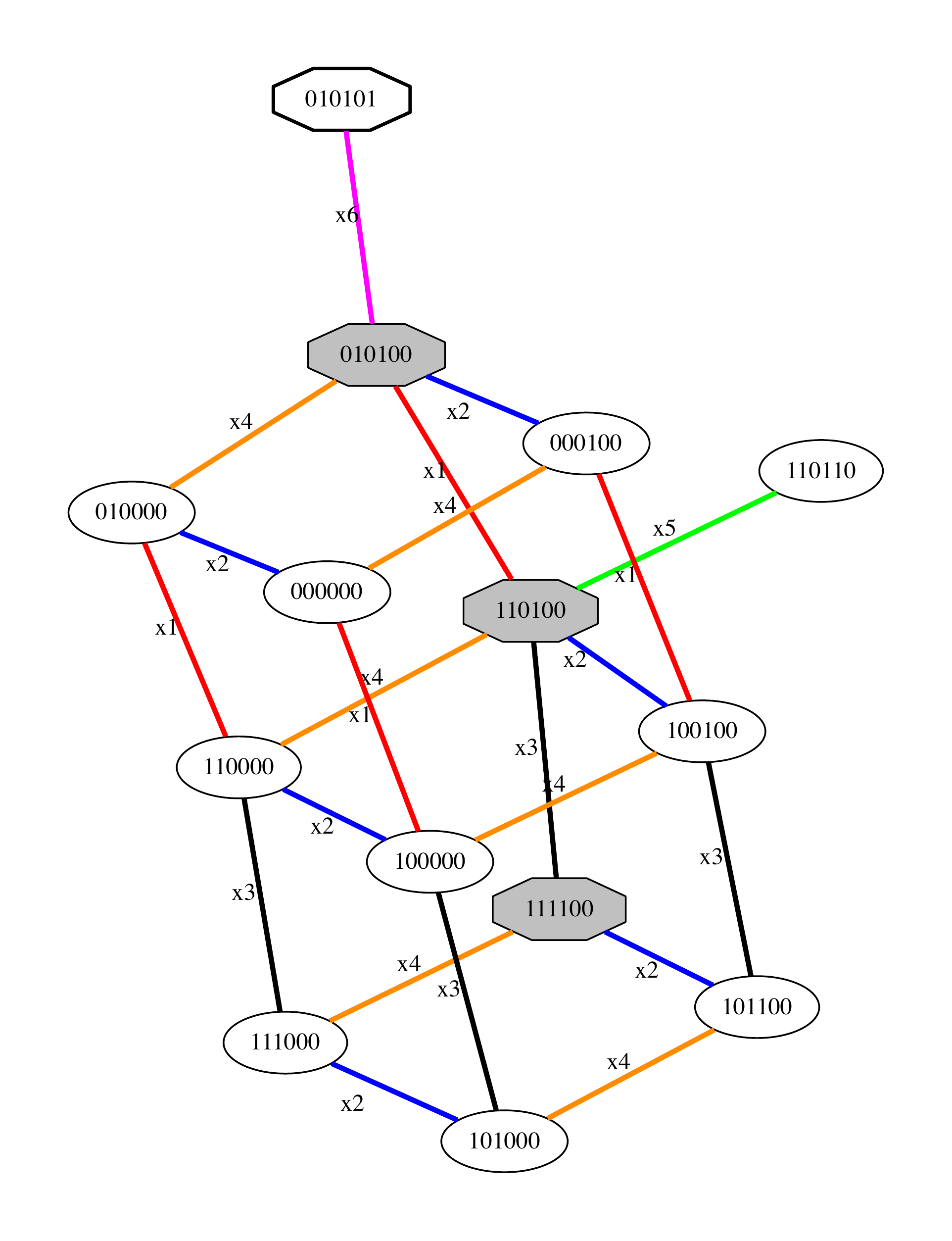}
\end{minipage}
&
\begin{minipage}[h]{.6\textwidth}
\hspace{-4cm}
\vspace*{-1cm}
\includegraphics[angle=+2,width=1.2\textwidth]{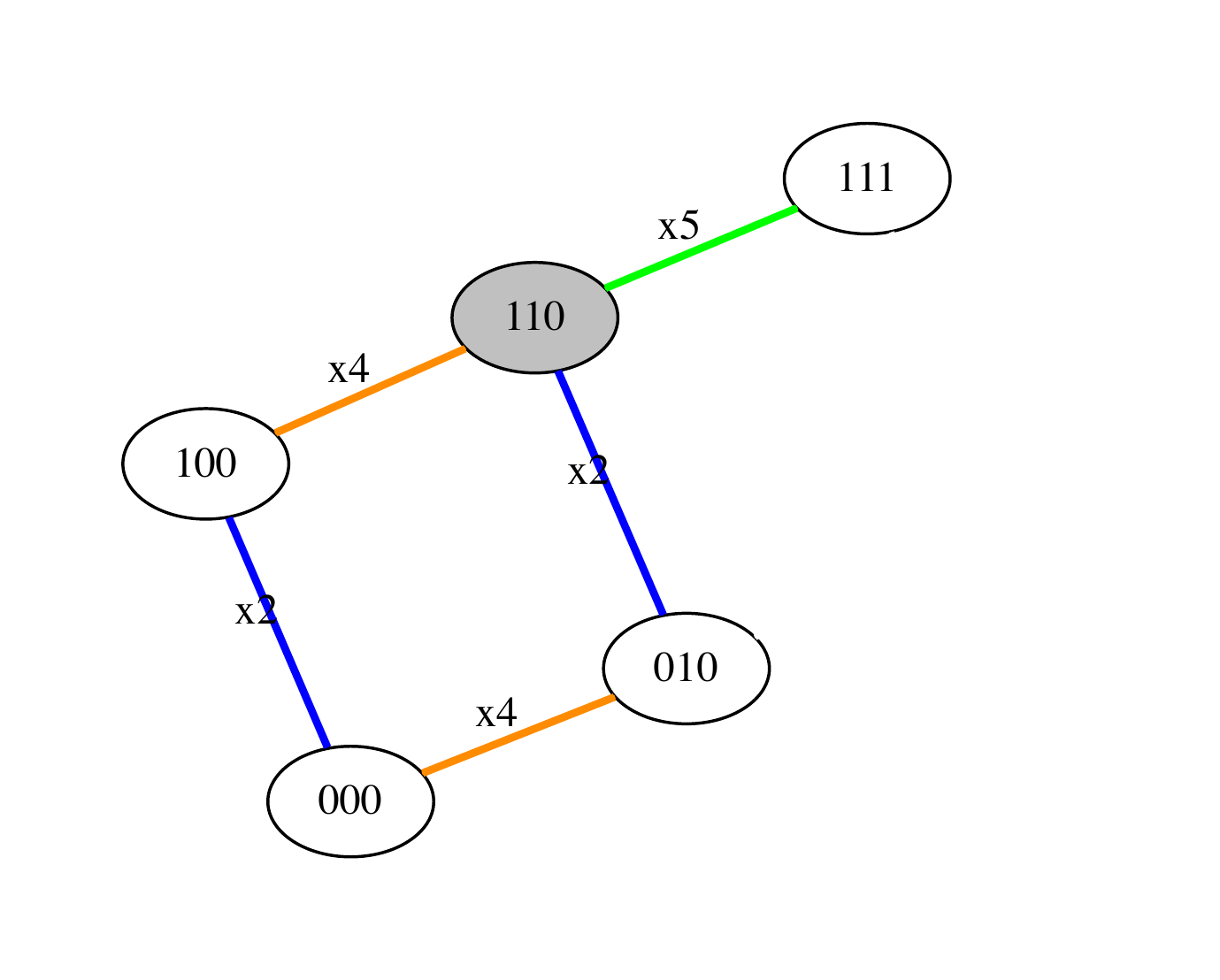}
\end{minipage}
\end{tabular}
\caption{
The one-inclusion graph of an extremal concept class 
$C$ is given on the left. 
Consider the sample
$s=\overset{x_2}{\one}\overset{x_4}{\one}\overset{x_5}{\zero}$.
There are 4 concepts $c\in C$ consistent
with this sample (the octagonal vertices), i.e.
$H_s=\{
1\one1\one\zero0,
1\one0\one\zero0,
0\one0\one\zero0,
0\one0\one\zero1
\}$.
There are 2 maximal cubes of $C|\dom(s)$ 
(graph on right) that contain the
sample $s$ (in grey) with dimension sets $\{x_5\}$ and $\{x_2,x_4\}$, respectively.
Let $B$ be the maximal cube with dimension set $D= \{x_2,x_4\}$.
There are 3 cubes of $C$ (on left) with the same dimension set $D$.
Each contains a concept $h$ (shaded grey) that is consistent with the original
sample on $D$, i.e.
$h|D=s|D=\overset{x_2}{\one}\overset{x_4}{\one}$ 
and therefore
$H_B=\{1\one1\one00,
       1\one0\one00,
       0\one0\one00
\}$. 
For the correctness we need that $H_B$ (grey nodes on left) 
is non-empty and a subset of $H_s$ (octagon nodes on left). 
Note that in this case
$H_B$ is a strict subset.
}
\end{figure}

\begin{proof}
Since $B$ is a cube with dimensions set $D$, 
$B|(\dom(s)\setminus D)$ contains the single concept
${\mathrm tag}(B)$.

Let $B'$ be any cube of $C$ with $\dim(B')=D$, and let $h$ be the concept in $B'$
which is consistent with $s$ on $D$.
Now consider the cube $B' \proj \dom(s)$.
We will show that $B'\proj \dom(s)=B$. This will finish
the proof as it shows that both $h\proj \dom(s)$ and $s$ 
belong to $B'\proj \dom(s) = B$ 
which means that ${\mathrm tag}(B)=h\proj(\dom(s)\setminus D)=s\proj(\dom(s)\setminus D)$.
Moreover, by the definition of $h$, $h\proj D = s\proj D$, and therefore $h\proj \dom(s) = s$ as required.

We now show that $B'\proj \dom(s) = B$.
Indeed, since $B'$ is a cube of $C$
with dimension set $D \subseteq \dom(s)$, the cube
$B'|\dom(s)$ is a
cube of $C|\dom(s)$ with the same dimension set $D$.
Thus the dimension set of $B'|\dom(s)$ contains the dimension
set of the maximal cube $B$ of $C|\dom(s)$.
Therefore, since $C\proj \dom(s)$ is extremal (Theorem~\ref{thm:operations})
it follows by Lemma~\ref{lem:maxcub} that $B'|\dom(s)=B$.
\end{proof}

\section{Unlabeled sample compression schemes and related combinatorial conjectures}\label{sec:unlabeled}
\begin{figure}[t]
\large
\begin{center}
\setlength{\tabcolsep}{2pt}
\begin{tabular}{c @{\hskip 1.5in} c}
  \begin{tabular}{cccccc}
  $x_1$&$x_2$&$x_3$&$x_4$&$x_5$&$x_6$\\
\hline\hline
  0&1&0&           1&0&\ul{1}\\
  \ul{0}&\ul{1}&0& \ul{1}&0&0\\
  \ul{0}&0&0&      \ul{1}&0&0\\
  1&1&0&           1&\ul{1}&0\\
  \ul{0}&\ul{1}&0& 0&0&0\\
  \ul{0}&0&0&      0&0&0\\
  1&\ul{1}&\ul{0}& \ul{1}&0&0\\
  1&0&\ul{0}&      \ul{1}&0&0\\
  1&\ul{1}&\ul{0}& 0&0&0\\
  1&0&\ul{0}&      0&0&0\\
  1&\ul{1}&1&      \ul{1}&0&0\\
  1&0&1&           \ul{1}&0&0\\
  1&\ul{1}&1&      0&0&0\\
  1&0&1&           0&0&0
  \end{tabular}
&
  \begin{tabular}{c 
                  >{\columncolor{Gray}}c
                  c
                  >{\columncolor{Gray}}c
                  >{\columncolor{Gray}}c
                  c
                  c}
  $x_1$&$x_2$&$x_3$&$x_4$&$x_5$&$x_6$&\\
  \hline \hline
  \ul{0}&\ul{1}&0& \ul{1}&0&0&\\
  0&1&0&           1&0&\ul{1}&\\
  1&\ul{1}&\ul{0}& \ul{1}&0&0&\\
  1&\ul{1}&1&      \ul{1}&0&0&$\boldsymbol{\leftarrow}$\\
\hline
  \ul{0}&0&0&      0&0&0&\\
  1&0&\ul{0}&      0&0&0&\\
  1&0&1&           0&0&0&$\boldsymbol{\leftarrow}$\\
\hline
  \ul{0}&0&0&      \ul{1}&0&0&\\
  1&0&\ul{0}&      \ul{1}&0&0&\\
  1&0&1&           \ul{1}&0&0&$\boldsymbol{\leftarrow}$\\
\hline
  \ul{0}&\ul{1}&0& 0&0&0&\\
  1&\ul{1}&\ul{0}& 0&0&0&\\
  1&\ul{1}&1&      0&0&0&$\boldsymbol{\leftarrow}$\\
\hline
  1&1&0&           1&\ul{1}&0&$\boldsymbol{\leftarrow}$\\
  \end{tabular}
\end{tabular}
\end{center}
\caption{\label{fig:unlab}
Unlabeled compression scheme based on peeling.
The vertices of the extremal class $C$ from the left figure of Fig.\ \ref{fig:compr}
were peeled in a top down order (See table on left). 
Note that the highest vertex is always a corner of the remaining extremal class below. 
The resulting representation sets are underlined in the left table.
For example the second concept is represented by the set $\{x_1,x_2,x_4\}$ 
and the last to the empty set.
\newline
Now consider the domain $D=\{x_2,x_4,x_5\}$ 
and a sample $s= \overset{x_2}{1}\overset{x_4}{1}\overset{x_5}{0}$ over this domain.
Partition $C$ into equivalence classes such that concepts in
the same class are consistent on $D$.
In the table on the right, we reordered and segmented the
concepts of $C$ by their equivalence classes.
Each class corresponds to a member of $C|D$.
In Lemma \ref{lemma:clash} we show that
each class contains exactly one concept $c$ such that
$r(c)\subseteq D$ (marked with $\boldsymbol{\leftarrow})$.
Each sample $s$ of $C|D$ is compressed to the unique subset of $D$
in the equivalence class that represents this consistent concept  
(marked with $\boldsymbol{\leftarrow}$). 
In the reconstruction, each representation set is reconstructed to the concept
it represents.
In particular the sample $s$ associated with the first class is compressed to
$\{x_2,x_4\}\subseteq D$ which represents the
consistent concept $111110$ and ``unlabeled sub sample'' $\{x_2,x_4\}$ 
is reconstructed to this concept.
}
\end{figure}

The labeled compression scheme of the previous section
compresses each sample of the concept class to a (labeled)
subsample and this subsample is guaranteed to represent a hypothesis
that is consistent with the entire original sample.
Such a labeled compression scheme (of size equal the VC
dimension $d$) was first found for maximum classes.
In the previous section, we generalized this scheme to extremal classes.

Alternate ``unlabeled'' compression schemes have also
been found for maximum classes and a natural question is whether
these schemes again generalize to extremal classes.
As we shall see there is an excellent match between
the combinatorics of unlabeled compression schemes and
extremal classes. The existence of such schemes remains open
at this point. We can however relate their existence 
to some natural conjectures about extremal classes. 

An unlabeled compression schemes compresses a sample $s$ of the
concept class $C$ to an (unlabeled) subset of the domain of the sample $s$.
In other words, in an unlabeled compression scheme the labels of the original
sample are not used by the reconstruction map. 
The size of the compression scheme
is now the maximum size of the subset that the sample is 
compressed to.
Consider an unlabeled compression scheme for $C$
of size $\VCdim(C)$.
For a moment restrict your attention
to samples of $C$ over some fixed domain $S \subseteq \dom(C)$.
Each such sample is a concept in the restriction $C|S$. 
Note that two different concepts in $C\proj S$ must be compressed
to different subsets of $S$, otherwise if they were compressed
to the same subset, the reconstruction of it would not be consistent with one 
of them. For maximum classes,
the number of concepts in $C|S$ is exactly the number of
subsets of $S$ of size up to the VC dimension.
Intuitively, this ``tightness'' makes 
unlabeled compression schemes combinatorially rich and interesting.

Previous unlabeled compression schemes for maximum classes were based on 
``representation maps''.
For maximum classes these are one-to-one mappings between $C$ and subsets of $\dom(C)$ of size at most $\VCdim(C)$. Representation maps were used in the following way: Each sample $s$ is compressed to a subset of $\dom(s)$ which represents a consistent hypothesis with $s$, and each subset of size at most $\VCdim(C)$ of $\dom(C)$ is reconstructed to the hypothesis it represents.
Clearly, not every one-to-one mapping between $C$ and subsets of $\dom(C)$ of size at most $\VCdim(C)$ yields an unlabeled compression scheme in this manner, and finding a good representation map (or proving that one exists) became the focus of many previous works.

For maximum classes, representation
maps $r$ have been found that map (one-to-one) the concept class $C$
to all subsets of $\dom(C)$ of size up to the VC dimension of $C$.
The key combinatorial property that enabled finding representation maps
for maximum classes was a ``non clashing'' condition~\citep{DBLP:journals/jmlr/KuzminW07}.
This property was used to show that 
for any sample $s$ of $C$ there is exactly one concept $c$ that
is consistent with $s$ and $r(c)\subseteq \dom(s)$.
This immediately implies an unlabeled compression scheme based on non clashing
representation maps: Compress to the
unique subset of the domain of the sample that represents
a concept consistent with the given sample.

%

We will show below that representation maps naturally
generalize to extremal classes: $r$ must now map
(one-to-one) the extremal class $C$ to its 
shattered sets $\s(C)$.
This is natural since for extremal classes $|C|=|\s(C)|$.
We will see that again, if the non clashing condition holds, 
then for any sample $s$ of $C$ there is exactly one concept $c$ that
is consistent with $s$ and $r(c)\subseteq \dom(s)$.

For maximum classes, such representation maps were first shown to
exist via a recursive construction \citep{DBLP:journals/jmlr/KuzminW07}.
Alternate representation maps were also proposed in
\citep{DBLP:journals/jmlr/KuzminW07} based on a certain greedy ``peeling''
algorithm that iteratively assigns a representation
to a concept and removes this concept from
the class. The correctness of the representation maps based on
peeling was finally established in \citep{RR3}.
In this section, we show that existence of representation
maps based on peeling hinges on certain natural and concise properties
of extremal classes. However establishing these conjectured properties 
of extremal classes remains open.

\paragraph{Representation maps.}
For any concept class $C$ a {\em representation map} is any
one-to-one mapping from concepts to subsets of the domain, i.e.
$r:C\rightarrow\powset(\dom(C))$. 
We say that $c\in C$ {\em is represented} by the 
{\em representation set} $r(c)$.
Furthermore we say that two different concepts $c,c'$ {\em clash}
with respect to $r$ if they are consistent with each other on the union of their
representation sets, i.e. 
$c|\left(r(c)\cup r(c')\right) = c'|\left(r(c)\cup r(c')\right)$. 
If no two concepts clash then we say that $r$ is \emph{non clashing}.

\begin{example}
{\bf (Non clashing maps based on disagreements)}
For an arbitrary concept class $C$ and $c_1,c_2\in C$, let $dis(c_1,c_2)$
be the set of all dimensions on which $c_1,c_2$ disagree,
i.e.  $dis(c_1,c_2) =\{x\in \dom(C)~:~c_1(x)\neq c_2(x)\}$.
Now let $c_0:\dom(C)\rightarrow\{0,1\}$ be a fixed ``reference'' concept
and define a representation map for class $C$ as $r(c):=dis(c,c_0)$. 
We leave it to the reader to verify that $r$ is non clashing.
\end{example}

\begin{example}
{\bf (A Non clashing representation map for distance preserving classes)}
Let $C$ be a distance preserving class, that is
for every $u,v\in C$, the distance between $u,v$ in the
one-inclusion graph of $C$ equals to their hamming distance.
For every $c\in C$, define 
$$deg_C(c)=\{x\in \dom(C)~:~c\mbox{ is incident to an }x\mbox{-edge in the one-inclusion graph of }C\}.$$
The representation map $r(c):=deg_C(c)$ has the property that for every $c\neq c'\in C$,
$c$ and $c'$ disagree on $r(c)$. To see this, note that
since $C$ is isometric then any shortest path from
$c$ to $c'$ in $C$ traverses exactly the
dimensions on which $c$ and $c'$ disagrees.
In particular, the first edge leaving $c$
in this path traverses a dimension $x$ for 
which $c(x)\neq c'(x)$. By the definition
of $deg_C(c)$ we have that $x\in deg_C(c)$ and 
indeed $c$ and $c'$ disagree on $deg_C(c)$.
\end{example}

In fact, this gives a stronger 
property for distance preserving classes, 
which is summarized in the following lemma.
This lemma will be useful in our analysis.
\begin{lemma}\label{lem:teaching}
Let $C$ be a distance preserving class and let $c\in C$.
Then $deg_C(c)$ is a teaching set for $c$
with respect to $C$. That is, for all $c'\in C$:
$$c'\neq c\implies \exists x\in deg_C(c):~c(x)\neq c'(x).$$
\end{lemma}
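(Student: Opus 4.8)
The plan is to invoke the distance preserving hypothesis directly, formalizing in the language of teaching sets the disagreement fact already observed in the preceding example. Fix an arbitrary $c'\in C$ with $c'\neq c$; the goal is to exhibit a dimension $x\in deg_C(c)$ with $c(x)\neq c'(x)$. First I would record the elementary combinatorial fact about the Boolean hypercube: since each edge of the one-inclusion graph of $C$ changes exactly one coordinate, any walk from $c$ to $c'$ must flip every coordinate in $dis(c,c')$ an odd (hence positive) number of times and every other coordinate an even number of times. Consequently every path from $c$ to $c'$ has length at least $|dis(c,c')|$, the Hamming distance between them.

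Next I would use that $C$ is distance preserving to choose a shortest path $c=u_0,u_1,\dots,u_k=c'$ in the one-inclusion graph of $C$, whose length $k$ equals the graph distance between $c$ and $c'$. By hypothesis this graph distance equals the Hamming distance $|dis(c,c')|$, so $k=|dis(c,c')|$. Comparing with the lower bound from the previous step, equality is attained, which forces each of the $k$ edges of the path to flip a distinct coordinate of $dis(c,c')$ exactly once and to leave all agreeing coordinates untouched. In particular, no edge of this shortest path is labeled by a dimension outside $dis(c,c')$.

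Finally I would examine the first edge $\{u_0,u_1\}=\{c,u_1\}$. By the conclusion just reached, it is labeled by some dimension $x\in dis(c,c')$, so $c(x)\neq c'(x)$. Moreover this edge is an $x$-edge of the one-inclusion graph of $C$ incident to $c$, so by the definition of $deg_C(c)$ we have $x\in deg_C(c)$. This produces the required $x$, and since $c'\neq c$ was arbitrary the lemma follows. The only step meriting any care is the claim that a shortest path uses exactly the disagreement dimensions, each exactly once; but this is an immediate parity consequence of the hypercube lower bound meeting the distance preserving equality, so I do not expect a genuine obstacle. Indeed, the entire content coincides with the disagreement property already established in the preceding example, and the lemma merely restates it as the assertion that $deg_C(c)$ is a teaching set for $c$.
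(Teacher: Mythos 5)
Your proof is correct and follows essentially the same route as the paper, which proves this claim inside the example immediately preceding the lemma: use the isometry to pick a shortest path from $c$ to $c'$, observe that such a path traverses exactly the dimensions in $dis(c,c')$, and conclude that the label $x$ of the first edge leaving $c$ lies in both $dis(c,c')$ and $deg_C(c)$. The only difference is that you spell out the parity/counting argument behind the claim that a shortest path uses exactly the disagreement dimensions, which the paper leaves implicit.
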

Clearly the representation map $r(c)=deg_C(c)$ is non clashing.
The following lemma establishes that certain non clashing
representation maps immediately give unlabeled compression schemes:
\begin{lemma}\label{lemma:clash}
Let $r$ be any representation map that is a bijection
between an extremal class $C$ and $\sstr(C)$.
Then the following two statements are equivalent:
\begin{enumerate}
\item
$r$ is non clashing.
\item
For every sample $s$ of $C$, there is exactly one concept
$c\in C$ that is consistent with $s$ and $r(c)\subseteq \dom(s)$.
\end{enumerate}
\end{lemma}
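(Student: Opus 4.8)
The plan is to prove the two implications separately, disposing of the easy direction $(2)\Rightarrow(1)$ first and then devoting the real effort to $(1)\Rightarrow(2)$, where a counting argument based on extremality carries the load.

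For $(2)\Rightarrow(1)$ I would argue by contraposition. Suppose two distinct concepts $c,c'$ clash, i.e.\ they agree on $r(c)\cup r(c')$. I would set $s := c\proj\bigl(r(c)\cup r(c')\bigr)$; the clashing condition says this equals $c'\proj\bigl(r(c)\cup r(c')\bigr)$, so $s$ is a genuine sample of $C$ with $\dom(s)=r(c)\cup r(c')$. Both $c$ and $c'$ are then consistent with $s$ and satisfy $r(c),r(c')\subseteq\dom(s)$, which contradicts the uniqueness asserted in $(2)$. Hence $(2)$ forces $r$ to be non clashing.

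For $(1)\Rightarrow(2)$, fix a sample $s$ and write $S:=\dom(s)$. Uniqueness is immediate from non clashing: if $c,c'$ are both consistent with $s$ and have $r(c),r(c')\subseteq S$, then they agree on $S\supseteq r(c)\cup r(c')$, hence clash, forcing $c=c'$. The substantive point is existence, namely that some concept consistent with $s$ has its representation set inside $S$. The plan here is a cardinality argument. Consider $A_S:=\{c\in C : r(c)\subseteq S\}$ together with the restriction map $c\mapsto c\proj S$ from $A_S$ into $C\proj S$. By the same non clashing observation this map is injective. I would then show $|A_S|=|C\proj S|$, which upgrades the injection into a bijection; its surjectivity then produces, for every sample over $S$, a consistent concept whose representation set lies in $S$.

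The crux, and the step I expect to be the main obstacle, is the identity $|A_S|=|C\proj S|$, since this is where all the hypotheses must be used simultaneously. Because $r$ is a bijection onto the strongly shattered sets and $\sstr(C)=\s(C)$ for extremal $C$, the size $|A_S|$ equals the number of shattered sets of $C$ contained in $S$. A set $T\subseteq S$ is shattered by $C$ if and only if it is shattered by the restriction $C\proj S$, so this count is exactly $|\s(C\proj S)|$. Finally, $C\proj S$ is extremal by Theorem~\ref{thm:operations}, whence $|\s(C\proj S)|=|C\proj S|$ by the defining equality of extremal classes (Theorem~\ref{thm:extrchar}). Chaining these equalities yields $|A_S|=|C\proj S|$, and combined with the injectivity and the uniqueness already established, this gives exactly one consistent concept with representation set in $\dom(s)$ for every sample, which is statement $(2)$.
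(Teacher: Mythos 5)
Your proposal is correct and follows essentially the same route as the paper's own proof: the identical contrapositive argument for $(2)\Rightarrow(1)$, and for $(1)\Rightarrow(2)$ the same counting argument — injectivity of the restriction map on $\{c : r(c)\subseteq \dom(s)\}$ via non clashing, combined with the cardinality chain $|A_S| = |\s(C)\cap\powset(S)| = |\s(C\proj S)| = |C\proj S|$ using extremality of the restriction and bijectivity of $r$. The only cosmetic difference is that you traverse this chain of equalities in the reverse order from the paper.
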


Based on this lemma it is easy to see
that a representation mapping $r$ for
an extremal concept class $C$ defines a compression
scheme as follows (See Algorithm \ref{algo:unlabel}
and an example in Fig.\ \ref{fig:unlab}).
For any sample $s$ of $C$ we {\em compress} $s$ to
the unique representative $r(c)$ such that
$c$ is consistent with $s$ and $r(c)\subseteq \dom(s)$.
Reconstruction is even simpler, since $r$ is
bijective: If $s$ is compressed to the set $r(c)$,
then we reconstruct $r(c)$ to the concept $c$. 

Note that the representation set $r(c)$ of a concept $c$
is always an unlabeled set from $\ss(C)$.
However, we could also compress to the labeled subsamples $c|r(c)$.
It is just that the labels in this type of scheme do not
have any additional information and are redundant.

\begin{algorithm}[H]~

The compression map.

 Input: A sample $s$ of $C$.
\begin{enumerate}
\item Let $c\in C$ be the unique concept which satsifies
(i) $c|\dom(s)=s$,
and (ii) $r(c)\subseteq \dom(s)$ 
\item Output $r(c)$.
\end{enumerate}

The reconstruction map.

Input: a set $S'\in\sstr(C)$
\begin{enumerate}
\item Since $r$ is a bijection between $C$ and $\sstr(C)$, there is a unique
$c$ such that $r(c)=S'$.
\item Output $c$.
\end{enumerate}
\caption{\label{algo:unlabel} \bf (An unlabeled compression scheme from a representation map)}
\end{algorithm}

\begin{proof}[\it of Lemma~\ref{lemma:clash}]

$2 \Rightarrow 1:$
Proof by contrapositive. Assume $\neg 1$, that is:
$\exists c, c'\in C,\, c\neq c' \text{ such that } 
c|r(c)\cup r(c') = c'|r(c)\cup r(c')$. 
Then let $s = c|r(c) \cup r(c')$.
Clearly both $c$ and $c'$ are consistent with $s$
and $r(c),r(c') \subseteq \dom(s)$.
This negates 2.

$1 \Rightarrow 2:$ 
{
We will show that 1 implies the following equivalent form
of 2: For all sample domains $D\subseteq \dom(C)$ and samples $s\in C|D$,
there is exactly one concept $c\in C$ that is consistent
with $s$ and $r(c) \subseteq D$.
Recall that any domain $D\subseteq \dom(C)$ partitions $C$
into equivalence classes where
each class contains all concepts of $C$ consistent with a sample from $C|D$.
We need to show that each equivalence class has a unique concept in
$R:=\{c:r(c)\in D\}$. 
See Fig.\ \ref{fig:unlab} for an example.
We split our goal into two parts: 
\vspace*{-.1cm}
\begin{enumerate}
\item[(a)] $C\proj D = R\proj D$, 
i.e. for every $s\in C \proj D$ there 
is at least one $c\in R$ such that $s=c\proj D$ and
\item[(b)] $|R|D| = |R|,$ 
i.e.  for each sample $s'\in R|D$
there is at most $c\in R$ such that $s'=c|D$.
\end{enumerate}

We first prove Part (b).
Clearly $|R|D| \le |R|.$
Furthermore, the non-clashing condition (Part 1 of the lemma) 
implies that any distinct concepts $c_1,c_2\in R$ disagree on
$r(c_1)\cup r(c_2)\subseteq D$ and therefore $|R\proj D| = |R|.$

Since $R|D \subseteq C|D$, 
the set equality $R|D= C|D$ of Part (a) is implied by the fact that
both sets have the same cardinality:
\begin{align*}
\lvert C\proj D \rvert &= \lvert\str(C\proj D)\rvert 
\tag{since $C\proj D$ is extremal}\\
                &= \lvert\str(C)\cap\mathcal{P}(D)\rvert
\tag{holds for every concept class $C$ and $D\subseteq\dom(C)$}\\
                &=\lvert R\rvert\tag{since $r:C\rightarrow\str(C)$ is a bijection}\\
                &= \lvert R\proj D\rvert\tag{by Part
(b).}
\end{align*}
}
\end{proof}

For a more detailed proof Assume $\neg 2$, i.e.
there is a sample $y$ of $C$ with $\dom(y)=Y$ for which
there are either zero or (at least) two
consistent concepts $c$ for which
$r(c) \subseteq Y$.
If two concepts $c, c'\in C$ are consistent with $y$
and $r(c),r(c')\subseteq Y$,
then $c|r(c)\cup r(c') = c'|r(c)\cup r(c')$
(which is $\neg 1$).
Assume now that there is no concept $c$ consistent with 
some sample $y$ of $C$ for which $r(c) \subseteq Y$. 
Note that
\begin{align*}
|C\proj Y| &= \lvert\sstr(C\proj Y)\rvert
\tag{Since $C\proj Y$ is extremal.}\\
                &= \lvert\sstr(C)\cap\mathcal{P}(Y)\rvert\\
                &=\lvert\{c:r(c)\subseteq Y\}\rvert\tag{Since $r:C\rightarrow\sstr(C)$ is a bijection.}
\end{align*}
In other words the number of samples consistent with $y$
equals the number of concepts with a representation set in $Y$.
Partition $C$ into equivalence classes where two concepts
$c,c'$ are equivalent if $c|Y=c'|Y$ (See Fig.\
\ref{fig:unlab} for a running example).
Thus, each equivalence class corresponds to a sample of $C$
with domain $Y$. Each concept is identified by its
representation set $r(c)$ and the number of equivalence
classes equals $\lvert\{c:r(c)\subseteq Y\}\rvert$.
By assumption, all concepts $c$ in the equivalence class of
sample $y$ have $r(c)\not\subseteq Y$. Therefore
by a pigeon hole argument there must be an
equivalence class with two distinct concepts 
$c_1,c_2\in C$ for which
$r(c_1),r(c_2)\subseteq Y.$
These two concept clash and again $\neg 1$ is implied.{\hspace*{\fill}$\Box$\par}

Once we have a valid representation mapping for some extremal concept class $C$, 
we can easily derive a valid mapping for any restriction of the class $C|A$ by compressing every 
restricted concept. This is discussed in the following corollary.

\begin{corollary} For any extremal class $C$ and $A \subseteq \dom(C)$, if $r$ is a representation mapping for $C$ then
a representation mapping for $C|A$ can be constructed as follows. For any $c \in C|A$, let
$r_A(c)$ be the representative of the unique 
concept $c' \in C$, such that $c'|A = c$ and $r(c') \subseteq A$.
\label{cor_restrict}
\end{corollary}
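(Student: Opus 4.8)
The plan is to show that the constructed map $r_A$ is again a \emph{non-clashing bijection} from the extremal class $C\proj A$ onto its strongly shattered sets $\sstr(C\proj A)$; by Lemma~\ref{lemma:clash} this is exactly what is needed for $r_A$ to be a valid representation mapping (one yielding an unlabeled compression scheme) for $C\proj A$. I would organize the argument into four steps: well-definedness of $r_A$, correct range, bijectivity, and the non-clashing property.

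First I would settle well-definedness, which is where Lemma~\ref{lemma:clash} does the essential work. A concept $c\in C\proj A$ is precisely a sample of $C$ with domain $A$. Since $r$ is a valid representation mapping for $C$ it is non-clashing, so statement~1 of Lemma~\ref{lemma:clash} holds and therefore so does statement~2. Applying statement~2 to the sample $c$ gives \emph{exactly one} concept $c'\in C$ with $c'\proj A=c$ and $r(c')\subseteq A$, so that $r_A(c):=r(c')$ is unambiguously defined.

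Next I would pin down the range and bijectivity. Because $C$ is extremal, so is $C\proj A$ (Theorem~\ref{thm:operations}); combining the extremality of both $C$ and $C\proj A$ with the general identity $\str(C\proj A)=\str(C)\cap\powset(A)$ yields $\sstr(C\proj A)=\sstr(C)\cap\powset(A)$. Since $r(c')\in\sstr(C)$ and $r(c')\subseteq A$ by construction, it follows that $r_A(c)=r(c')\in\sstr(C\proj A)$, so $r_A$ indeed maps into $\sstr(C\proj A)$. Injectivity is immediate: $r_A(c_1)=r_A(c_2)$ forces $r(c_1')=r(c_2')$, hence $c_1'=c_2'$ as $r$ is injective, and thus $c_1=c_1'\proj A=c_2'\proj A=c_2$. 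Finally, extremality of $C\proj A$ gives $\lvert C\proj A\rvert=\lvert\sstr(C\proj A)\rvert$ (Theorem~\ref{thm:extrchar}), so an injection between finite sets of equal cardinality is automatically onto, and $r_A$ is a bijection.

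It remains to verify that $r_A$ is non-clashing, which I regard as the conceptual heart of the corollary even though it is short. Given distinct $c_1,c_2\in C\proj A$ with preimages $c_1',c_2'\in C$ (these are distinct since $c_i=c_i'\proj A$), the non-clashing of $r$ supplies a coordinate $x\in r(c_1')\cup r(c_2')$ on which $c_1'$ and $c_2'$ disagree. Because $r(c_1')\cup r(c_2')\subseteq A$ and $c_i$ agrees with $c_i'$ on $A$, I get $c_1(x)=c_1'(x)\neq c_2'(x)=c_2(x)$, so $c_1$ and $c_2$ disagree on $r_A(c_1)\cup r_A(c_2)=r(c_1')\cup r(c_2')$ and hence do not clash. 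The main obstacle, if any, is the bookkeeping in the range/bijectivity step, where both $\sstr(C\proj A)=\sstr(C)\cap\powset(A)$ and $\lvert C\proj A\rvert=\lvert\sstr(C\proj A)\rvert$ rely on extremality; the non-clashing transfer itself goes through precisely because every representation set $r_A(c)$ is forced to lie inside $A$.
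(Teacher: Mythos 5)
Your proof is correct and follows essentially the same route as the paper's: well-definedness of $r_A$ via Lemma~\ref{lemma:clash}, the non-clashing property inherited from $r$ because all representation sets lie inside $A$, and bijectivity onto $\sstr(C|A)$ by the same counting argument (extremality of $C|A$ plus $\sstr(C)\cap\powset(A)$). You simply spell out the details that the paper's proof leaves implicit.
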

\begin{proof}
The construction of the mapping for $C|A$ essentially tells us to treat the concept $c$ as a sample from $C$
and to compress it. Thus we can apply Lemma
\ref{lemma:clash} to see that $r_A(c) \subseteq A$ is always uniquely 
defined. Now we need to show that $r_A$ 
satisfies the conditions of the Main Definition. 
Since the representatives $r_A(c)$ are subsets of $A$,
the non-clashing property for the representation mapping
$r_A$ for $C|A$
follows from the non-clashing condition for $r$ for $C$.
The bijection property follows from a counting
argument like the one used in the proof of Lemma 
\ref{lemma:clash}, since 
$\size(C|A) = \size(\{r(c) \text{ s.t. } r(c) \subseteq A\})$.
\end{proof}

\paragraph{Corner peeling yields good representation maps.}
We now present a natural conjecture concerning extremal classes
and show how this conjecture can be used to construct non clashing representation maps.
A concept $c$ of an extremal class $C$ is a {\em corner} of $C$ if
$C\setminus\{c\}$ is extremal.
By Lemma~\ref{lem:maxcub} we have that for
each $S\subseteq \dom(C)$ there is at most one
maximal cube with dimension set $S$ and if $S$
is the dimensions set of a non-maximal cube, then
there are at least two cubes with this dimension set. Therefore
$$\ss(C\setminus\{c\})
= \ss(C) \setminus 
\{\dim(B): B \text{ is maximal cube of $C$ containing $c$}\}.$$ 
For $C\setminus\{c\}$ to be extremal,
$|\ss(C\setminus\{c\})|$ must be $|C|-1$ (by Theorem \ref{thm:extrchar})
and therefore $c$ is a corner of an extremal class $C$ iff
$c$ lies in exactly one maximal cube of $C$.

\begin{conjecture}\label{conj:corner-peeling}
Every non empty extremal class $C$ has at least one corner.
\end{conjecture}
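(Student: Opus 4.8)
The plan is to prove Conjecture~\ref{conj:corner-peeling} by induction on $|\dom(C)|$, using the reformulation recorded just above the statement: a concept $c \in C$ is a corner if and only if it lies in exactly one maximal cube of $C$, equivalently $\st(C \setminus \{c\}) = \st(C) \setminus \{\dim(B)\}$ for a single maximal cube $B \ni c$. The base case $|\dom(C)| \le 1$ is trivial, and we may assume $|C| \ge 2$, so that by Theorem~\ref{thm:isom} the one-inclusion graph of $C$ is connected and carries at least one edge. Fix a dimension $x \in \dom(C)$ that labels some edge. The key structural input is that every natural piece obtained from $C$ along $x$ is again extremal: $C - x$ and the reduction $C^x$ are extremal over $\dom(C) \setminus \{x\}$ by Theorem~\ref{thm:operations}, and the two ``slices'' $C_b := C \cap \{c : c(x) = b\}$ for $b \in \{0,1\}$ are extremal as well, since $\{c : c(x) = b\}$ is a cube with dimension set $\dom(C) \setminus \{x\}$ and Theorem~\ref{thm:operations} guarantees that the intersection of $C$ with such a cube is extremal. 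These pieces are tied together by the elementary counting identity $|C| = |C - x| + |C^x|$, in which $C^x$ records precisely the $x$-labeled edges.

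The inductive step I would attempt is to locate a corner inside a smaller piece and promote it. Applying the induction hypothesis to the extremal class $C^x$ (nonempty by the choice of $x$) yields a corner $\gamma$ of $C^x$, i.e. an $x$-edge $\{c_0, c_1\}$ of $C$ lying in a \emph{single} maximal cube of $C^x$; this cube expands, via the dimension $x$, to a maximal cube of $C$ through both $c_0$ and $c_1$. The hope is that on one of the two sides --- say at $c_1$ --- this is the \emph{only} maximal cube of $C$ through $c_1$, so that $c_1$ is a corner of $C$. To control the competing cubes I would use Lemma~\ref{lem:maxcub}: the dimension sets of the maximal cubes of an extremal class are pairwise incomparable, which sharply limits how a maximal cube through $c_1$ inside the slice $C_1$ (on $\dom(C) \setminus \{x\}$) can coexist with the $x$-spanning cube, since one of the corresponding dimension sets would otherwise contain the other. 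A parallel route is to apply induction directly to the slice $C_1$, obtain a corner $c'$, lift it to $c' \in C$, and analyze the two possible sources of maximal cubes at $c'$: those lying entirely within $C_1$ and the (at most one) cube that spans dimension $x$.

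The step I expect to be the genuine obstacle is exactly this bookkeeping of maximal cubes across the two constructions. A corner of a slice or reduction need not remain a corner of $C$, because (i) a cube that is maximal inside $C_1$ may fail to be maximal in $C$ once it can be extended in the $x$-direction, and (ii) a vertex that meets a unique maximal cube in the smaller class may acquire a second, $x$-spanning maximal cube in $C$. Forcing these two phenomena never to occur simultaneously --- i.e. guaranteeing the total count at some vertex is exactly one --- is the crux, and it is where a naive induction breaks down. I would therefore try to gain freedom either by choosing $x$ adaptively (for instance as a dimension incident to a ``locally extremal'' vertex) or by tracking the partial order obtained after down-shifting $C$ to a downward-closed extremal class $C'$: in $C'$ corners are transparent, since every $\le$-maximal concept lies in a unique maximal cube (the full subcube on its support) and is automatically a corner. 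The remaining and essential difficulty is that down-shifting preserves cardinalities and the family of shattered sets but not the identities of the concepts, so a corner of $C'$ does not pull back to a corner of $C$; bridging this gap --- or equivalently, making the inductive promotion step unconditional --- is precisely the content of the conjecture and the point at which the argument must do real work.
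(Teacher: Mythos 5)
There is nothing in the paper to compare your attempt against: the statement you were given is Conjecture~\ref{conj:corner-peeling}, which the paper explicitly leaves \emph{open}. The authors only report that it is known in special cases --- for maximum classes \citep{RR3}, and for extremal classes of VC dimension at most $2$ \citep{Litman12,MeszarosR14} --- and they use it conditionally (a corner-peeling order, \emph{if it exists}, yields a non-clashing representation map via Theorem~\ref{thm:isom} and Lemma~\ref{lem:teaching}). So the relevant question is only whether your argument itself closes the gap, and by your own accurate diagnosis it does not. Your structural ingredients are all correct: $C-x$, $C^x$, and the slices $C_b = C \cap \{c : c(x)=b\}$ are extremal (Theorem~\ref{thm:operations}), $|C| = |C-x| + |C^x|$, and Lemma~\ref{lem:maxcub} gives incomparability of maximal dimension sets. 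But the promotion step --- showing that some corner of $C^x$ or of a slice $C_1$ can be chosen so that the corresponding vertex of $C$ lies in exactly one maximal cube of $C$ --- is precisely the content of the conjecture, and you leave it unproved. The two failure modes you name are real: a cube maximal in $C_1$ may extend in the $x$-direction, and a vertex with a unique maximal cube in the smaller class may pick up a second, $x$-spanning maximal cube in $C$. Likewise the down-shifting fallback fails for the reason you state: shifting preserves $|C|$ and the shattered sets but scrambles which concepts are present, so corners of the shifted class do not pull back.

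It is worth knowing that this gap cannot be bridged by a cleverer choice of $x$ or a more careful bookkeeping: in subsequent work, Chalopin, Chepoi, Moran, and Warmuth (\emph{Unlabeled sample compression schemes and corner peelings for ample and maximum classes}, ICALP 2019) constructed an extremal (ample) class with \emph{no} corners at all, refuting Conjecture~\ref{conj:corner-peeling}. So any inductive scheme of the kind you outline must break down somewhere; your proposal correctly identifies where, but the honest conclusion is stronger than ``the argument must do real work here'' --- no argument can succeed, and the interesting question becomes which weaker peeling notions (e.g.\ Conjecture~\ref{conj:intermediate}, or peeling by sets rather than single corners) survive.
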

In \citep{DBLP:journals/jmlr/KuzminW07}
essentially the same conjecture was presented for maximum classes.
For these latter classes, the conjecture was finally proved in \citep{RR3}.
This conjecture also has been proven for other special
cases such as extremal classes of VC dimension at most $2$
\citep{Litman12,MeszarosR14}. In fact \cite{Litman12} proved a stronger statement: 
For every two extremal classes $C_1\subseteq C_2$ such that $\VCdim(C_2)\leq 2$ 
and $|C_2\setminus C_1|\geq 2$, there exists an extremal class $C$ such that $C_1\subset C\subset C_2$ (i.e. $C$ is a strict subset of $C_2$ and a strict superset of $C_1$).
Indeed, this statement is stronger as by repeatedly picking
a larger extremal class $C_1\subseteq C_2$ eventually a $c\in C_2$ is obtained such that $C_2-\{c\}$ is extremal.
For general extremal classes this stronger statement also remains open.
\begin{conjecture}\label{conj:intermediate}
For every two extremal classes $C_1\subseteq C_2$ with $|C_2\setminus C_1|\geq 2$
there exists an extremal class $C$ such that $C_1\subset C\subset C_2$.
\end{conjecture}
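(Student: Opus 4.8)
This statement is a relative, interval strengthening of the corner peeling conjecture (Conjecture~\ref{conj:corner-peeling}), and indeed implies it, so I expect any attack to be at least as hard as corner peeling itself. The plan is to reduce the existence of a strict intermediate class to a \emph{relative corner} statement: it suffices to exhibit, among the at least two concepts of $C_2\setminus C_1$, either a corner of $C_2$ or a ``co-corner'' of $C_1$. Concretely, if some $c\in C_2\setminus C_1$ is a corner of $C_2$ (i.e.\ lies in exactly one maximal cube of $C_2$, by the characterization preceding Conjecture~\ref{conj:corner-peeling}), then $C:=C_2\setminus\{c\}$ is extremal, contains $C_1$, and is strictly smaller than $C_2$; since $|C_2\setminus C_1|\ge 2$ it is also strictly larger than $C_1$, so it is the desired intermediate class. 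Dually, if some $c\in C_2\setminus C_1$ satisfies that $C_1\cup\{c\}$ is extremal, then $C:=C_1\cup\{c\}$ works. Using complement symmetry (Theorem~\ref{thm:extrchar}, item~5), the second case is equivalent to $\{0,1\}^n\setminus C_1$ having a corner in $C_2\setminus C_1$, so both cases are instances of finding a corner inside a prescribed region, one for $C_2$ and one for the complement of $C_1$.

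To produce such a corner I would induct on $n=|\dom(C_2)|$. The engine is the recursive characterization: for any dimension $x$, the identity $\s(C)=\s(C-x)\cup\{S\cup\{x\}:S\in\s(C^x)\}$ together with $|C|=|C-x|+|C^x|$ and the Sandwich Theorem shows that $C$ is extremal \emph{iff} both $C-x$ and $C^x$ are extremal. Fixing $x$, monotonicity of restriction and reduction under inclusion gives two nested pairs of extremal classes over $\dom(C_2)\setminus\{x\}$, namely $C_1-x\subseteq C_2-x$ and $C_1^x\subseteq C_2^x$ (extremality by Theorem~\ref{thm:operations}), and the gaps split additively as $|C_2\setminus C_1|=|(C_2-x)\setminus(C_1-x)|+|C_2^x\setminus C_1^x|$. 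When one of the two sub-gaps is at least $2$, the induction hypothesis yields a strict intermediate extremal class ``downstairs''.

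The hard part will be the \emph{lifting} step, and this is where I expect the whole argument to stand or fall. Fixing $x$, an extremal class is recovered from the pair $(C-x,C^x)$ by specifying, for each concept $t\in C-x$, whether $t$ is \emph{doubled} (both $x$-extensions present, so $t\in C^x$) or \emph{single} (exactly one extension); the doubled concepts constitute the reduction $C^x$, and extremality of $C$ amounts to extremality of both $C-x$ and $C^x$. To convert an intermediate class found in one factor into an extremal $C$ strictly between $C_1$ and $C_2$, one must choose the single/doubled pattern globally so as to respect $C_1\subseteq C\subseteq C_2$ while keeping both factors extremal, and these choices are coupled across concepts. In addition, the boundary case where both sub-gaps equal $1$ escapes the induction and needs a separate argument. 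Finding such a coherent lift is precisely the obstruction that keeps the conjecture open beyond $\VCdim(C_2)\le 2$ (settled by Litman), and I would not expect the complement duality of the first paragraph to do more than halve the case analysis.
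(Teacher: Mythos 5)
First, a framing point: the statement you were asked to prove is Conjecture~\ref{conj:intermediate} of the paper, which the paper does \emph{not} prove --- it explicitly records that the statement is known only for $\VCdim(C_2)\le 2$ (Litman's result) and that ``for general extremal classes this stronger statement also remains open.'' So there is no proof in the paper to compare against, and you are right to present a strategy rather than claim a theorem. Your preliminary observations are correct: the conjecture implies Conjecture~\ref{conj:corner-peeling}; it would follow from exhibiting in $C_2\setminus C_1$ either a corner of $C_2$ or a concept $c$ with $C_1\cup\{c\}$ extremal; and the latter is, via Theorem~\ref{thm:extrchar}(5), a corner condition on the complement of $C_1$ (though note this reduction demands more than the conjecture: an intermediate class need not differ from $C_1$ or $C_2$ by a single concept).

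However, the ``engine'' of your proposed induction is false as stated, so the gap is not only in the lifting step where you locate it. You claim that $C$ is extremal \emph{iff} both $C-x$ and $C^x$ are extremal, based on the identity $\s(C)=\s(C-x)\cup\{S\cup\{x\}:S\in\s(C^x)\}$. Only the forward direction holds (Theorem~\ref{thm:operations}); the converse and the identity both fail. Take $\dom(C)=\{x,y,z\}$ and $C=\{000,100,110,011\}$. Then $C-x=\{00,10,11\}$ and $C^x=\{00\}$ are both extremal, but $C$ is not: the set $\{x,y\}$ is shattered by $C$ (all four $(x,y)$-patterns occur) yet not strongly shattered (a $2$-cube would need four concepts agreeing on $z$, and only three concepts of $C$ have $z=0$), so $|\st(C)|=3<4=|C|$. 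The correct general facts are the inclusions $\s(C)\supseteq\s(C-x)\cup\{S\cup\{x\}:S\in\s(C^x)\}$ and $\st(C)\subseteq\st(C-x)\cup\{S\cup\{x\}:S\in\st(C^x)\}$, which go the wrong way for your argument: together with $|C|=|C-x|+|C^x|$, extremality of the two factors yields only $|\st(C)|\le|C|\le|\s(C)|$, i.e., the Sandwich Theorem again and no conclusion. A true recursive characterization requires a third, coupling condition --- for instance that every strongly shattered set of $C-x$ is also strongly shattered by $C$ (every cube of $C-x$ lifts to a cube of $C$) --- and this is precisely the single/doubled coherence you defer to the ``lifting step.'' So, as written, the proposal does not reduce the conjecture to lifting plus a boundary case: without that extra condition, the induction hypothesis applied downstairs gives you nothing upstairs, and the conjecture remains, as the paper says, open.
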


Let us return to the more basic
Conjecture~\ref{conj:corner-peeling}.
How does this conjecture yield a representation map?
Define an order\footnote{Such orderings are related to the recursive teaching dimension which was studied by~\cite{DoliwaSZ10}} on $C$ 
$$c_1,c_2\ldots c_{|C|}$$
such that for every $i$,
$c_i$ is a corner of $C_i=\{c_j : j\geq i\}$, and define a map $r:C\rightarrow\sstr(C)$
such that $r(c_i)=\dim(B_i)$ where $B_i$ is the unique maximal cube of
$C_i$ that $c_i$ belongs to. We claim that $r$ is a representation map. Indeed, $r$ is a
one-to-one mapping from $C$ to $\sstr(C)$ (and since $C$ is extremal $r$ is a bijection).
To see that $r$ is non clashing, note that
$r(c_i)=\dim(B_i)=\deg_{C_i}(c_i)$. $C_i$ is extremal and therefore distance preserving (Theorem~\ref{thm:isom}). Thus, Lemma~\ref{lem:teaching} implies that $r(c_i)$ is a teaching set of $c_i$ with respect to $C_i$. This implies that $r$ is indeed non clashing.

\section{Discussion}
We studied the conjecture of \cite{DBLP:journals/ml/FloydW95} 
which asserts that every concept classes has a sample compression scheme 
of size linear in its VC dimension.
We extended the family of concept classes for which the conjecture is known to 
hold by showing that every extremal class has a sample compression scheme of size equal to its VC dimension. We discussed the fact that extremal classes form a natural 
and rich generalization of maximum classes for which the
conjecture had been proved before~\citep{DBLP:journals/ml/FloydW95}.

We further related basic conjectures concerning the
combinatorial structure of extremal classes with the
existence of optimal unlabeled compression schemes. These
connections may also be used in the future to provide a
better understanding on the combinatorial structure of
extremal classes, which is considered to be incomplete by several authors~\citep{BR95,Greco98,S-ext}. 

Our compression schemes for extremal classes yield another
direction of attacking the general conjecture of Floyd and
Warmuth: it is enough to show that an arbitrary maximal concept class of 
VC dimension $d$ can be 
covered by $\exp(d)$ extremal classes of VC dimension $O(d)$.
Note it takes additional $O(d)$ bits to specify which of the
$\exp(d)$ extremal classes is used in the compression.

\paragraph{\bf Acknowledgements} We thank Micha\l$\;$Derezi\'nski
for a good feedback on the writing of the paper and Ami
Litman for helpful combinatorial insights.


\bibliography{compRef}

\appendix

\section{Proof of Claim~\ref{lem:maximalextremal}}\label{app:maximalextremal}
To prove this claim use the following simple fact.
\begin{lemma}[\cite{DBLP:journals/corr/abs-1211-2980,ShatNews,BR95}]
Let $C,\bC$ be two complementing concept classes over domain $X$.
Then for every $Y\subseteq X$ exactly one of the following holds.
\begin{enumerate}
\item $C$ strongly shatters $Y$.
\item $\bC$  shatters $\bar{Y}$.
\end{enumerate}
\end{lemma}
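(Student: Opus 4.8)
The plan is to unfold both alternatives in terms of the fibers of the projection onto $\bar Y:=X\setminus Y$ and observe that statement~1 is literally the negation of statement~2. For each assignment $\sigma\in\{0,1\}^{\bar Y}$ introduce the fiber
$$F_\sigma:=\{c\in\{0,1\}^X : c|\bar Y=\sigma\}.$$
This is exactly the subcube of the full cube $\{0,1\}^X$ with dimension set $Y$ and $\mathrm{tag}(F_\sigma)=\sigma$: indeed $F_\sigma|\bar Y=\{\sigma\}$ is a single concept while $F_\sigma|Y=\{0,1\}^Y$. Since $Y$ and $\bar Y$ partition $X$, the family $\{F_\sigma\}_{\sigma\in\{0,1\}^{\bar Y}}$ partitions $\{0,1\}^X$, and this single bookkeeping device is what drives the whole argument.

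First I would translate statement~1 into fiber language. By the definition of a cube of $C$ over the domain $X$, a cube with dimension set $Y$ is precisely a fiber $F_\sigma$ that lies entirely inside $C$. Hence $C$ strongly shatters $Y$ if and only if there \emph{exists} $\sigma\in\{0,1\}^{\bar Y}$ with $F_\sigma\subseteq C$. Next I would translate statement~2: by definition $\bC$ shatters $\bar Y$ iff $\bC|\bar Y=\{0,1\}^{\bar Y}$, i.e. every $\sigma\in\{0,1\}^{\bar Y}$ is realized as $c|\bar Y$ for some $c\in\bC$, which says exactly that $F_\sigma\cap\bC\neq\emptyset$ for \emph{every} $\sigma$.

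Now I would invoke that $C$ and $\bC$ complement each other, $\bC=\{0,1\}^X\setminus C$. For a fixed fiber this gives the pointwise equivalence $F_\sigma\cap\bC=\emptyset \iff F_\sigma\subseteq C$. Therefore the negation of statement~2, namely ``there exists $\sigma$ with $F_\sigma\cap\bC=\emptyset$'', is word-for-word ``there exists $\sigma$ with $F_\sigma\subseteq C$'', which is statement~1. This yields
$$\text{(statement 1)}\iff\neg\,\text{(statement 2)},$$
so the two alternatives are mutually exclusive and exhaustive, and exactly one holds, as claimed.

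There is no genuine obstacle here; the entire content is the observation that strong shattering of $Y$ is an \emph{existential} quantifier over the fibers $F_\sigma$ while shattering of $\bar Y$ is the \emph{universal} quantifier over the same fibers, and that ``some fiber is contained in $C$'' and ``some fiber avoids $\bC$'' coincide under complementation. The only point requiring a little care is to carry out the argument over the full cube $\{0,1\}^X$ (so that $C$ and $\bC$ really do partition it) and to keep the roles of $Y$ — the cube directions for $C$ — and $\bar Y$ — the shattered coordinates for $\bC$ — straight; once the fibers are set up correctly the equivalence is immediate.
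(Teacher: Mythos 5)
Your proof is correct and complete. Note that the paper does not actually prove this lemma---it is stated with citations to the literature and used as a black box in Appendix~A---so there is no in-paper argument to compare against; your fiber argument is the standard one: a cube of $C$ with dimension set $Y$ is precisely a full fiber of the projection onto $\bar{Y}$ contained in $C$ (using that a concept on $X$ is determined by its values on $Y$ and $\bar{Y}$), shattering of $\bar{Y}$ by $\bar{C}$ says every fiber meets $\bar{C}$, and complementation turns ``some fiber avoids $\bar{C}$'' into ``some fiber lies in $C$'', giving that statement~1 is exactly the negation of statement~2. Your handling of the quantifier structure (existential for strong shattering, universal for shattering) is precisely the content of the lemma, and the argument covers the degenerate cases $Y=\emptyset$ and $Y=X$ under the paper's conventions without modification.
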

With this lemma at hand, note that if $C$ is {\bf extremal} then for every $Y\subseteq X$, either
$C$ {\bf strongly shatters} $Y$ or $\bC$ {\bf strongly shatters} $\bar{Y}$.

Going back to our $C$ from the construction, it is easy to
verify that $\bC$ (and therefore $C$) is extremal, because glueing an edge of a new dimension to a concept of an extremal class preserves extremality.
 Thus, by the above lemma 
$\VCdim(C)=d=n-2$ (because every subset of size $1$ is strongly shattered by $\bC$ but there are subsets of size $2$ that are not strongly shattered by $\bC$).
To see why $C$ is maximal we again use the above lemma and
observe that every concept $\bar{c}$ which is removed from $\bC$
removes a set of size $1$ (the set containing the unique
dimension of the edge glued to $\bar{c}$) from the strongly shattered sets
of $\bC$. This means that a set of size
$n-1$ is added to the shattered sets of $C$ and
the VC dimension of $C$ is increased from $n-2$ to $n-1$.

\end{document}